\newtheorem{theorem}{Theorem}
\newtheorem{proposition}[theorem]{Proposition}
\renewcommand{\hat}{\widehat}
\renewcommand{\bar}[1]{\mkern 1.5mu\overline{\mkern-1.5mu#1\mkern-1.5mu}\mkern 1.5mu}
\newcommand{\R}{\mathbb{R}}
\newcommand{\defeq}{\triangleq} 
\newcommand{\ph}   {\leavevmode \kern.06em\vbox{\hrule width.8em}} 
\newcommand{\cH}{\mathcal{H}}
\newcommand{\cO}{\mathcal{O}}
\newcommand{\cX}{\mathcal{X}}
\newcommand{\cZ}{\mathcal{Z}}
\newcommand{\ELBO}{\mathrm{ELBO}}
\newcommand{\MaRep}{\mathrm{MRep}}
\newcommand{\algo}{\mbox{\sc Alg}}
\newcommand{\HMCnaive}{\mbox{$\textsf{HMC}_{\textsf{naive}}$}}
\newcommand{\HMCours}{\mbox{$\textsf{HMC}_{\textsf{ours}}$}}
\newcommand{\IPMCMC}{\mbox{\sf IPMCMC}}
\newcommand{\LMH}{\mbox{\sf LMH}}
\newcommand{\cmt}[2]{\textcolor{blue}{{\bf [}{\bf #1:} {\it #2}{\bf ]}}}
\newcommand*{\ks}[1]{\cmt{KS}{#1}}
\newcommand*{\wy}[1]{\cmt{WY}{#1}}
\newcommand{\commentout}[1]{}
\newcommand{\citet}[1]{\citeauthor{#1} \shortcite{#1}}
\newcommand{\citep}{\cite}
\title{Differentiable Algorithm for Marginalising Changepoints}
\author{Hyoungjin Lim, \ Gwonsoo Che, \ Wonyeol Lee, \ Hongseok Yang \\
	School of Computing\\
	KAIST, South Korea\\
	\{lmkmkr, gche, wonyeol, hongseok.yang\}@kaist.ac.kr}
\begin{document}
\maketitle


\begin{abstract}
We present an algorithm for marginalising changepoints in time-series models that assume a fixed number of unknown changepoints. 
Our algorithm is differentiable with respect to its inputs, which are the values of latent random variables other than 
changepoints. Also, it runs in time $\cO(mn)$ where $n$ is the number of time steps and $m$ the number of changepoints, an 
improvement over a naive marginalisation method with $\cO(n^m)$ time complexity. 
We derive the algorithm by identifying quantities related to this marginalisation problem, showing that these quantities satisfy recursive relationships, and transforming the relationships to an algorithm via dynamic programming. Since our algorithm is differentiable, it can be applied to convert a model non-differentiable due to changepoints to a differentiable one, so that the resulting models can be analysed using gradient-based inference or learning techniques. 
We empirically show the effectiveness of our algorithm in this application 
by tackling the posterior inference problem on synthetic and real-world data.
\end{abstract}


\section{Introduction}


Time-series data from, for instance, econometrics, medical science, and political science \citep{erdman2008fast,lio2000wavelet,spokoiny2009multiscale,kaylea2017computationally,reeves2007review,fong2012distributed} often show abrupt regime shifts, so that analysing those data commonly requires reasoning about the moments of these shifts, called changepoints. Two popular reasoning tasks are inferring the number of changepoints and detecting the specific values or distributions of the changepoints. Information found from these tasks enables the use of different statistical models for different segments of the data, identified by changepoints, which leads to accurate analysis of the data. However, due to the discrete nature of changepoints, developing efficient algorithms for the tasks is tricky, and often requires an insight into the structure of a class of models used.

In the paper, we study the problem of marginalising changepoints, which has been under-explored compared with the two tasks mentioned above. We present a differentiable algorithm for marginalising changepoints for a class of time-series models that assume a fixed number of changepoints. Our algorithm runs in $\cO(mn)$ time where $m$ is the number of changepoints and $n$ the number of time steps. We do not know of any $\cO(mn)$-time algorithm that directly solves this changepoint-marginalisation problem. 
%
%
%
The class of models handled by our algorithm is broad, including non-Markovian time-series models.

Our marginalisation algorithm is differentiable with respect to its inputs, which enables the use of gradient-based algorithms for posterior inference and parameter learning on changepoint models. Since changepoints are discrete, gradient-based algorithms cannot be applied to these models, unless changepoints are marginalised out. In fact, marginalising discrete variables, such as changepoints, is a trick commonly adopted by the users of the Hamiltonian Monte Carlo algorithm or its variant \citep{StanUsersGuide}. Our algorithm makes the trick a viable option for changepoint models. Its $\cO(mn)$ time complexity ensures low marginalisation overhead. Its differentiability implies that the gradients of marginalised terms can be computed by off-the-shelf automated differentiation tools \citep{paszke2017automatic,tensorflow2016-osdi}. In the paper, we demonstrate these benefits of our algorithm for posterior inference.

The key insight of our algorithm is that the likelihood of latent variables with changepoints marginalised out can be expressed in terms of quantities that satisfy recursive relationships. The algorithm employs dynamic programming to compute these quantities efficiently. Its $\cO(mn)$ time complexity comes from this dynamic programming scheme, and its differentiability comes from the fact that dynamic programming uses only differentiable operations. In our experiments with an inference problem, the algorithm outperforms existing alternatives.

The rest of the paper is organised as follows. 
In \S\ref{sec:result}, we describe our algorithm and its theoretical properties, and in \S\ref{sec:learning}, we explain how this algorithm can be used to learn model parameters from given data. In \S\ref{sec:empirical}, we describe our experiments where we apply the algorithm to a posterior-inference problem. In \S\ref{sec:related},
we put our results in the context of existing work on changepoint models and differentiable algorithms, and conclude the paper.


\section{Marginalisation Algorithm}
\label{sec:result}

Let $n,m$ be positive integers with $n \geq m$, and $\mathbb{R}_+$ be the set of positive real numbers. We consider a probabilistic model for $n$-step time-series data with $m{+}1$ changepoints, which has the following form. Let
$\cX \subseteq \mathbb{R}^k$ and $\cZ \subseteq \mathbb{R}^l$.
\begin{align*}
        x_{1:n} \in \cX^n 
        & \text{ --- data points over $n$ time steps.} 
        \\ 
        w_{1:n} \in \mathbb{R}^n_{+} 
        & \text{ --- }
        \begin{array}[t]{@{}l@{}}
                \text{$w_t$ expresses a relative chance of the}\\
                \text{step $t$ becoming a changepoint. $w_n = 1$.}
        \end{array}
        \\ 
        z_{1:m} \in \cZ^m 
        & \text{ --- }
        \begin{array}[t]{@{}l@{}}
                \text{latent parameters deciding the distribution}\\
                \text{of the data points $x_{1:n}$.}
        \end{array}
        \\ 
        \tau_{0:m} \in \mathbb{N}^{m+1}
        & \text{ --- changepoints. $0 \,{=}\, \tau_0 \,{<}\, \tau_1 \,{<} \cdots {<}\, \tau_m \,{=}\, n$.} 
\end{align*}
\begin{align*}
        & P(\tau_{0:m}\,|\,w_{1:n}) 
        \defeq
        \frac{1}{W} \prod_{i=1}^{m-1} w_{\tau_i}
        \quad
        \text{ where $W = \sum_{\tau_{0:m}} \prod_{i=1}^{m-1} w_{\tau_i}$},
        \\
        &
        P(x_{1:n},z_{1:m},\tau_{0:m} \,|\,w_{1:n}) 
        \defeq
        {}
        \\
        & \quad\
        P(z_{1:m})
        P(\tau_{0:m} \,|\,w_{1:n})
        \prod_{i=1}^m \prod_{(j=\tau_{i-1} + 1)}^{\tau_i} P(x_j\,|\,x_{1:j-1},z_i).
\end{align*}
For simplicity, we assume for now that $w_{1:n}$ is fixed and its normalising constant $W$ is known. In \S\ref{sec:norm-const}, we will show how the assumption can be removed safely.

Our goal is to find an efficient algorithm for computing 
the likelihood of the data $x_{1:n}$ for the latent $z_{1:m}$, which involves marginalising the changepoints $\tau_{0:m}$
as shown below:
\begin{equation}\label{eqn:goal}
        P(x_{1:n} \,|\, z_{1:m}, w_{1:n})
        \,{=}\, \sum_{\tau_{0:m}} P(x_{1:n},\tau_{0:m}\,|\,z_{1:m}, w_{1:n}).
\end{equation}
Note that summing the terms in \eqref{eqn:goal} naively is not a viable option because the number of the terms grows exponentially in the number of changepoints (i.e., $\cO(n^{m-1})$).

Our algorithm computes the sum in \eqref{eqn:goal} in $\cO(mn)$ time. Two key ideas behind the algorithm
are to rewrite the sum in terms of recursively expressible quantities, and to compute these quantities efficiently using dynamic programming. 

For integers $k,t$ with $1\leq k < m$ and $m-1 \leq t < n$,
let
\begin{align*}
        T_{k,t} & \defeq \{ \tau_{0:m} \mid
        \begin{array}[t]{@{}l@{}}
                \tau_{0:m} \text{ changepoints},\ \tau_{m-1} = t,\ \text{and}\\
                1 + \tau_i = \tau_{i+1} \text{ for all $i$ with } k\,{\leq}\, i \,{<}\, m{-}1\},
        \end{array}
        \\
        L_{0,t} & \defeq 0, 
        \qquad
        L_{k,t} \defeq \sum_{\tau_{0:m} \in T_{k,t}} P(x_{1:n},\tau_{0:m} \,|\, z_{1:m},w_{1:n}), 
        \\
        R_{m,t} & \defeq 1,
        \qquad
        b(k,t) \defeq t-((m-1)-k),
        \\
        R_{k,t} & \defeq 
        \prod_{j=k}^{m-1} \frac{P(x_{b(j,t)+1} \,|\,x_{1:b(j,t)}, z_j) \times w_{b(j,t) + 1}}{P(x_{b(j,t)+1} \,|\,x_{1:b(j,t)}, z_{j+1}) \times w_{b(j,t)}}\, .
\end{align*}
The first $T_{k,t}$ consists of changepoints $\tau_{0:m}$ such that $\tau_{m-1}$ ends at $t$ and $(\tau_k,\tau_{k+1},\ldots,\tau_{m-1})$ are consecutive. The next $L_{k,t}$ selects the summands in \eqref{eqn:goal} whose changepoints $\tau_{0:m}$ are in $T_{k,t}$. It then sums the selected terms. The $b(k,t)$ computes the value of the $k$-th changepoint for $1 \leq k < m$ when the changepoints $\tau_{k}$, $\tau_{k+1},\ldots,\tau_{m-1}$ are consecutive and the value of $\tau_{m-1}$ is $t$. The last $R_{k,t}$ is the ratio of the probabilities of the segment $x_{b(k,t){+}1:b(m{-}1,t){+}1}$ ($= x_{b(k,t){+}1:t{+}1}$) and the changepoints $\tau_{k:m-1}$ under two different assumptions. The numerator assumes that $\tau_{j}=b(j,t)+1$
for all $k \leq j < m$, whereas the denominator assumes that $\tau_{j} = b(j,t)$ for all those $j$. A good heuristic is to view $R_{k,t}$ as the change in the probability of the segment $x_{b(k,t)+1:t+1}$ and the changepoints $\tau_{k:m-1}$ when those changepoints are shifted one step to the right.

%
\begin{figure}[t]
        \center{
	\includegraphics[width=.95\columnwidth]{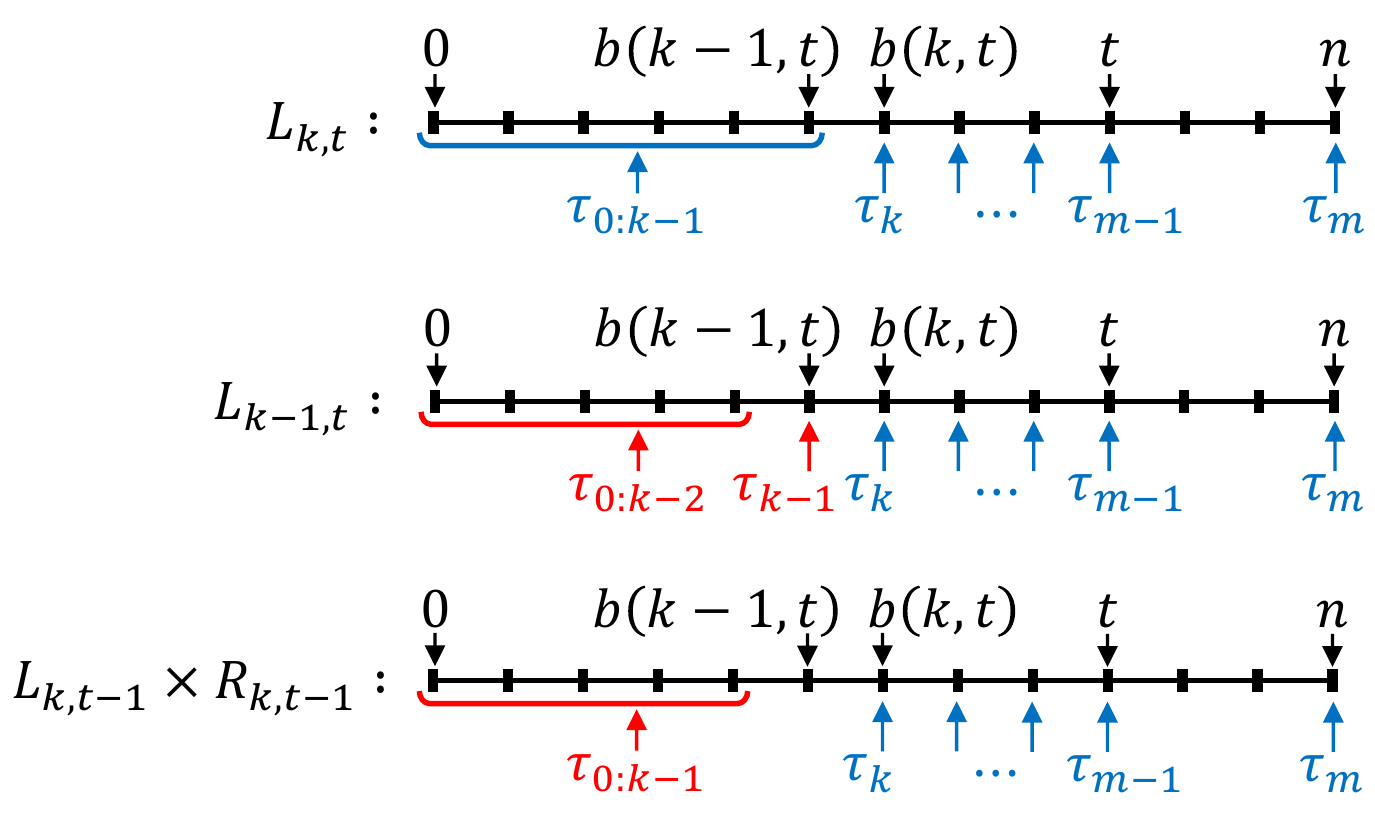}
        }
        \caption{Visualisation of a case split used in Theorem~\ref{thm:algo:recL}.}
        \label{fig:visualisation:split}
\end{figure}

The next three results formally state what we have promised: the $L_{k,t}$ can be used to express the sum in \eqref{eqn:goal}, and the $L_{k,t}$ and the $R_{k,t}$ satisfy recursive relationships.
\begin{proposition}\label{prop:algo:finalL}
        $\sum_{\tau_{0:m}} P(x_{1:n},\tau_{0:m}\,|\, z_{1:m}, w_{1:n})$ in \eqref{eqn:goal} is equal to
        $\sum_{t=m-1}^{n-1} L_{m-1,t}$.
\end{proposition}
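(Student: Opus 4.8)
The plan is to prove the identity by unfolding the definition of $L_{m-1,t}$ and observing that the sets $T_{m-1,t}$, as $t$ ranges over $\{m-1,\dots,n-1\}$, form a partition of the set of all changepoint sequences $\tau_{0:m}$. First I would specialise the definition of $T_{k,t}$ to the case $k = m-1$: the condition ``$1 + \tau_i = \tau_{i+1}$ for all $i$ with $m-1 \leq i < m-1$'' quantifies over an empty set of indices and is therefore vacuously true, so $T_{m-1,t}$ is simply the collection of all changepoint sequences $\tau_{0:m}$ with $\tau_{m-1} = t$. Consequently $L_{m-1,t} = \sum_{\tau_{0:m}:\,\tau_{m-1}=t} P(x_{1:n},\tau_{0:m}\mid z_{1:m},w_{1:n})$.

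Next I would check that the index range $t \in \{m-1,\dots,n-1\}$ in the claimed sum is exactly the range of possible values of $\tau_{m-1}$. Since any changepoint sequence satisfies $0 = \tau_0 < \tau_1 < \cdots < \tau_{m-1} < \tau_m = n$ with all $\tau_i \in \mathbb{N}$, the chain $\tau_1 < \cdots < \tau_{m-1}$ of $m-1$ strictly increasing positive integers forces $\tau_{m-1} \geq m-1$, while $\tau_{m-1} < \tau_m = n$ forces $\tau_{m-1} \leq n-1$. Hence every changepoint sequence lies in exactly one $T_{m-1,t}$ with $m-1 \leq t \leq n-1$, and these sets are pairwise disjoint, being distinguished by the value of $\tau_{m-1}$.

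Finally I would conclude by interchanging/expanding the sums: $\sum_{t=m-1}^{n-1} L_{m-1,t} = \sum_{t=m-1}^{n-1} \sum_{\tau_{0:m} \in T_{m-1,t}} P(x_{1:n},\tau_{0:m}\mid z_{1:m},w_{1:n}) = \sum_{\tau_{0:m}} P(x_{1:n},\tau_{0:m}\mid z_{1:m},w_{1:n})$, where the last equality uses the partition just established; this is precisely the right-hand side of \eqref{eqn:goal}. I do not expect any genuine obstacle here: the argument is essentially bookkeeping, and the only point requiring a moment's care is the vacuity of the consecutiveness constraint when $k = m-1$, which is exactly what makes $T_{m-1,t}$ large enough to recover the full sum rather than a proper subset of it.
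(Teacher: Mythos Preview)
Your proposal is correct and follows exactly the same approach as the paper's own proof, which simply states that $\{T_{m-1,t}\mid m-1\leq t<n\}$ is a partition of the set of all changepoints $\tau_{0:m}$. You have merely made explicit the details (the vacuity of the consecutiveness constraint when $k=m-1$ and the range of $\tau_{m-1}$) that the paper leaves to the reader.
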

\begin{proof}
Because $\{ T_{m-1,t} \,\mid\,m-1 \leq t < n\}$ is a partition of the set of all changepoints $\tau_{0:m}$.
\end{proof}
\noindent{Thus, we can marginalise changepoints by computing $\sum_{t=m-1}^{n-1} L_{m-1,t}$. This begs the question of how to compute the $L_{m-1,t}$'s. The next two results give an answer.}
\begin{theorem}\label{thm:algo:recL}
For all $k,t$ with $1 \leq k < m$ and $m \leq t < n$, 
\begin{align*}
        L_{k,m-1} & = P(x_{1:n} \,|\, z_{1:m},\, \tau_{0:m}{=}(0,1,\ldots,m{-}1,n)) 
        \\
                  & \qquad\qquad {} \times P(\tau_{0:m}{=}(0,1,\ldots,m{-}1,n) \,|\, w_{1:n})\, ,
        \\
        L_{k,t} & = L_{k-1,t} + L_{k,t-1} \times R_{k,t-1}\, .
\end{align*}
\end{theorem}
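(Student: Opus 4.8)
The plan is to establish the base-case identity for $L_{k,m-1}$ and the recurrence for $L_{k,t}$ separately. For the base case, the first observation is that $T_{k,m-1}$ is a singleton for every $k$: a changepoint sequence there has $\tau_0=0$, $\tau_{m-1}=m-1$, $\tau_m=n$, and $\tau_1<\cdots<\tau_{m-2}$ are $m-2$ distinct integers strictly between $0$ and $m-1$, forcing $\tau_i=i$ for all $0\le i\le m-1$. Hence $L_{k,m-1}$ is the single term $P(x_{1:n},(0,1,\ldots,m-1,n)\,|\,z_{1:m},w_{1:n})$, and splitting the model's joint density into its likelihood part $\prod_i\prod_j P(x_j\,|\,x_{1:j-1},z_i)$ and its prior part $P(\tau_{0:m}\,|\,w_{1:n})$ gives exactly the claimed factorisation.

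For the recurrence, fix $k,t$ with $1\le k<m$ and $m\le t<n$, and use the case split depicted in Figure~\ref{fig:visualisation:split}. Every $\tau_{0:m}\in T_{k,t}$ has its block $\tau_k,\ldots,\tau_{m-1}$ consecutive and ending at $t$, so $\tau_k=b(k,t)$ is forced and therefore $\tau_{k-1}\le b(k,t)-1$; thus either $\tau_{k-1}=b(k,t)-1=b(k-1,t)$, which together with the remaining constraints is precisely the defining condition of $T_{k-1,t}\subseteq T_{k,t}$, or $\tau_{k-1}\le b(k,t)-2$. Summing $P(x_{1:n},\tau_{0:m}\,|\,z_{1:m},w_{1:n})$ over the first class is $L_{k-1,t}$ by definition (reading $T_{0,t}=\emptyset$, consistently with $L_{0,t}\defeq 0$, in the degenerate case $k=1$, where the constraint $\tau_{k-1}\le b(k,t)-2$ is automatically satisfied since $\tau_0=0$ and $t\ge m$). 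It then remains to identify the sum over the second class with $L_{k,t-1}\times R_{k,t-1}$.

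For this I would introduce the shift-right map $\phi$ that fixes $\tau_0,\ldots,\tau_{k-1}$ and $\tau_m$ and adds $1$ to each of $\tau_k,\ldots,\tau_{m-1}$; since $b(j,t-1)+1=b(j,t)$, one checks that $\phi$ is a bijection from $T_{k,t-1}$ onto $\{\tau_{0:m}\in T_{k,t}:\tau_{k-1}\le b(k,t)-2\}$ that sends valid changepoint sequences to valid ones (this is where the hypotheses $m\le t<n$ are used: $t<n$ keeps $\tau_{m-1}=t$ strictly below $\tau_m=n$, and $t\ge m$ puts $t-1$ in the range where $L_{k,t-1}$ and $R_{k,t-1}$ are defined). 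The crux is then the pointwise identity, for $\tau_{0:m}\in T_{k,t-1}$,
\[
        P(x_{1:n},\phi(\tau_{0:m})\,|\,z_{1:m},w_{1:n}) = P(x_{1:n},\tau_{0:m}\,|\,z_{1:m},w_{1:n})\times R_{k,t-1},
\]
which I would obtain by forming the ratio of the two joint densities. In that ratio, $1/W$ and the likelihood contributions of segments $1,\ldots,k-1$ cancel; the product $\prod_{i=1}^{m-1}w_{\tau_i}$ contributes $\prod_{j=k}^{m-1} w_{b(j,t-1)+1}/w_{b(j,t-1)}$; and the remaining conditional-likelihood factors telescope --- segment $k$ gains the single factor $P(x_{b(k,t-1)+1}\,|\,x_{1:b(k,t-1)},z_k)$, each singleton segment $j$ with $k<j\le m-1$ has its lone data point moved from $x_{b(j-1,t-1)+1}$ to $x_{b(j,t-1)+1}$ (both read with parameter $z_j$), and segment $m$ drops the factor $P(x_{b(m-1,t-1)+1}\,|\,x_{1:b(m-1,t-1)},z_m)$ at its left end. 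Pairing the two occurrences of each data point $x_{b(j,t-1)+1}$ (once with $z_j$, once with $z_{j+1}$) leaves $\prod_{j=k}^{m-1} P(x_{b(j,t-1)+1}\,|\,x_{1:b(j,t-1)},z_j)\,/\,P(x_{b(j,t-1)+1}\,|\,x_{1:b(j,t-1)},z_{j+1})$, and multiplying by the $w$-product recovers exactly $R_{k,t-1}$. Summing the identity over $\tau_{0:m}\in T_{k,t-1}$ and adding $L_{k-1,t}$ then yields $L_{k,t}=L_{k-1,t}+L_{k,t-1}\times R_{k,t-1}$.

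The main obstacle is this telescoping step: one has to pin down exactly which conditional-likelihood factors each of the two joint densities contributes --- in particular for the boundary segments $k$ and $m$, and using that the interior segments are singletons because their changepoints are consecutive --- and then verify that the surviving factors regroup into the product defining $R_{k,t-1}$. Everything else (the case split, the bijection $\phi$, and the base case) is routine once the definitions of $T_{k,t}$, $b(k,t)$, and the model's joint density are unwound.
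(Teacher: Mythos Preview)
Your proposal is correct and follows essentially the same route as the paper: the base case is handled identically via the observation that $T_{k,m-1}$ is a singleton, and the recurrence is proved by the same case split on whether $\tau_{k-1}=b(k-1,t)$, together with the same shift bijection between $T_{k,t-1}$ and $\{\tau\in T_{k,t}:\tau_{k-1}\neq b(k-1,t)\}$ (the paper phrases it as the inverse shift-left map $\tau\mapsto\tau'$, but the content is identical). Your telescoping computation of the likelihood ratio is spelled out in more detail than the paper's, but it is the same calculation and arrives at the same product form for $R_{k,t-1}$.
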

\noindent
Figure~\ref{fig:visualisation:split} visualises a case split used in the second equation.
$L_{k,t}$ is the quantity about changepoints with $\tau_{k:m-1} = (b(k,t),\ldots,t)$.
The figure shows that such changepoints can be partitioned
into those with $\tau_{k-1} = b(k-1,t)$ and the rest.
The first summand $L_{k-1,t}$ computes the contribution
of the changepoints in the first partition,
and the other summand of the equation that of the changepoints in the second partition.
\begin{proof}
By definition, $T_{k,m-1}$ is the singleton set $\{\tau_{0:m} \,\mid\,\tau_{0:m} = (0,1,\ldots,m-1,n)\}$.
The first equality in the theorem follows from this and the definition of $L_{k,m-1}$. 
For the second equality, consider $k,t$ that satisfy the condition in the theorem. We will prove the following two equations:
\begin{align}
        \label{eqn:Lthm-gen:1}
        & {L_{k,t-1} \times R_{k,t-1}} =
        \!\!\!\!\!\!\!\!\!\!
        \sum_{\substack{\tau_{0:m} \in T_{k,t} \\ \tau_{k{-}1}\neq b(k{-}1,t)}}
        \!\!\!\!\!\!\!\!\!\!
        P(x_{1:n}, \tau_{0:m} \,|\, z_{1:m}, w_{1:n})\, ,
        \\
        \label{eqn:Lthm-gen:2}
        & L_{k-1,t} =
        \sum_{\substack{\tau_{0:m} \in T_{k,t} \\ \tau_{k{-}1} = b(k{-}1,t)}}
        P(x_{1:n}, \tau_{0:m} \,|\, z_{1:m}, w_{1:n})\, .
\end{align}
\noindent{The desired conclusion follows from these two equations:}
\begin{align*}
        & L_{k-1,t} + L_{k,t-1} \times R_{k,t-1}
        \\
        & \qquad{} =
        \sum_{\tau_{0:m} \in T_{k,t}} P(x_{1:n},\tau_{0:m} \,|\, z_{1:m},w_{1:n})
        = L_{k,t}\, .
\end{align*}

Equation \eqref{eqn:Lthm-gen:2} holds since
$T_{k-1,t} = \{\tau_{0:m} \in T_{k,t} \,\mid\,\tau_{k{-}1} = b(k{-}1,t)\}$.
Equation \eqref{eqn:Lthm-gen:1} is proved as follows.
Let
\begin{align*}
        \hat{T}_{k,t} & \defeq \{\tau_{0:m} \in T_{k,t} | \tau_{k-1}\neq b(k-1,t)\},\\
        \tau'_{0:m} & \defeq (\tau_{0:k-1}, \tau_{k}-1, ... , \tau_{m-1}-1, n) \text{ \, for } \tau_{0:m} \in \hat{T}_{k,t}.
\end{align*}
Then, $\{\tau'_{0:m}|\tau_{0:m} \in \hat{T}_{k,t}\}$ = $T_{k,t-1}$.
For every $\tau_{0:m} \in \hat{T}_{k,t}$,
\begin{align*}
    & \frac{P(x_{1:n},\tau_{0:m}|z_{1:m},w_{1:n})}{P(x_{1:n},\tau'_{0:m}|z_{1:m},w_{1:n})}\\
  & = \frac{P(\tau_{0:m}|w_{1:n})}{P(\tau'_{0:m}|w_{1:n})} \times
  \frac{P(x_{1:n}|\tau_{0:m},z_{1:m})}{P(x_{1:n}|\tau'_{0:m},z_{1:m})}\\
    & = \prod_{i=k}^{m-1} \frac{w_{\tau_i}}{w_{\tau_i'}}  \times 
     \prod_{i=k}^{m-1} \frac{P(x_{\tau_i}|x_{1:\tau_i-1},z_i)}{P(x_{\tau'_i+1}|x_{1:\tau'_i},z_{i+1})}\\
    & = \prod_{i=k}^{m-1} \frac{w_{b(i,t)}}{w_{b(i,t)-1}} \times 
    \left. \prod_{i=k}^{m-1} \frac{P(x_{b(i,t)}|x_{1:b(i,t)-1},z_i)}{P(x_{b(i,t)}|x_{1:b(i,t)-1},z_{i+1})}\right.\\
    & = R_{k,t-1}.
\end{align*}
Therefore,
\begin{align*}
  &\sum_{\tau_{0:m} \in \hat{T}_{k,t}} P(x_{1:n},\tau_{0:m} \,|\, z_{1:m}, w_{1:n})\\
  & = \sum_{\tau_{0:m} \in \hat{T}_{k,t}} P(x_{1:n},\tau'_{0:m} \,|\, z_{1:m},w_{1:n}) \times R_{k,t-1}\\
  & = \sum_{\tau_{0:m} \in T_{k,t-1}} P(x_{1:n},\tau_{0:m} \,|\, z_{1:m},w_{1:n}) \times R_{k,t-1} \\
  & = L_{k,t-1} \times R_{k,t-1}.
\end{align*}
\end{proof}

\begin{proposition}\label{prop:algo:recR}
For all $k,t$ with $1 \leq k < m$ and $m-1\leq t < n$,
\[
        R_{k,t} = R_{k+1,t} \times \frac{P(x_{b(k,t)+1} \,|\,x_{1:b(k,t)}, z_k) \times w_{b(k,t) + 1}}{P(x_{b(k,t)+1} \,|\,x_{1:b(k,t)}, z_{k+1}) \times w_{b(k,t)}}\, .
\]
\end{proposition}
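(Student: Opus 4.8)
The plan is to derive the identity directly from the definition of $R_{k,t}$, by peeling off a single factor from the product. Write $f(j,t)$ for the ratio $\frac{P(x_{b(j,t)+1}\,|\,x_{1:b(j,t)},z_j)\times w_{b(j,t)+1}}{P(x_{b(j,t)+1}\,|\,x_{1:b(j,t)},z_{j+1})\times w_{b(j,t)}}$, so that by definition $R_{k,t}=\prod_{j=k}^{m-1} f(j,t)$, with the convention $R_{m,t}=1$. The right-hand side of the proposition is exactly $R_{k+1,t}\times f(k,t)$, so the whole claim reduces to the elementary fact that $\prod_{j=k}^{m-1} f(j,t)=f(k,t)\times\prod_{j=k+1}^{m-1} f(j,t)$.

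First I would fix $k,t$ with $1\le k<m$ and $m-1\le t<n$ and check that every index occurring in $R_{k,t}$ is legitimate: since $b(j,t)=t-((m-1)-j)=t-(m-1)+j$, for $k\le j\le m-1$ we have $1\le k\le b(j,t)\le t\le n-1$, so each factor $f(j,t)$ involves only well-defined conditional densities $P(x_{b(j,t)+1}\,|\,x_{1:b(j,t)},z_j)$, $P(x_{b(j,t)+1}\,|\,x_{1:b(j,t)},z_{j+1})$ and weights $w_{b(j,t)}$, $w_{b(j,t)+1}$; hence $R_{k,t}$ is a finite product whose factors may be regrouped freely. Then I would split the product into its leading factor and the rest. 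If $k<m-1$, then $\prod_{j=k}^{m-1} f(j,t)=f(k,t)\times\prod_{j=k+1}^{m-1} f(j,t)$, and the second factor is $R_{k+1,t}$ by definition, which gives the claim. If $k=m-1$, then the product has the single term $f(m-1,t)$, and since $R_{k+1,t}=R_{m,t}=1$ by the stated convention, we again obtain $R_{k,t}=R_{k+1,t}\times f(k,t)$.

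The only obstacle here is bookkeeping rather than mathematics: one must treat the boundary case $k=m-1$ so that it agrees with the convention $R_{m,t}=1$, and one must confirm that the hypotheses on $k$ and $t$ keep $b(\cdot,t)$ in range so that each factor makes sense. The proposition is essentially the ``factor out the first term'' decomposition of the product defining $R_{k,t}$, and it is exactly what is needed to fill the table of the $R_{k,t}$'s by dynamic programming, scanning $k$ downward from $m$ to $1$ with $\cO(1)$ work per entry, in parallel with the recursion for $L_{k,t}$ from Theorem~\ref{thm:algo:recL}.
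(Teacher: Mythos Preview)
Your proof is correct and follows exactly the approach the paper intends: the paper's own proof is the single line ``Immediate from the definition of $R_{k,t}$,'' and you have simply spelled out that immediacy by peeling off the $j=k$ factor and invoking the convention $R_{m,t}=1$ at the boundary. The additional index-range verification is sound and adds rigour, but there is no substantive difference in method.
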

\begin{proof}
Immediate from the definition of $R_{k,t}$.         
\end{proof}

\begin{algorithm}[t]
        \caption{Algorithm for marginalising changepoints.\label{algo:marginal}}
        \begin{algorithmic}[1]
                \Require (i) integer $m$; (ii) weights $w_{1:n}$ with $w_n\,{=}\,1$; (iii) normalising constant $W$ for $P(\tau_{0:m}|w_{1:n})$; (iv) latent variables $z_{1:m}$; (v) time-series data $x_{1:n}$ with $1 \,{\leq}\, m \,{\leq}\, n$
                \Ensure likelihood $P(x_{1:n} \,|\,z_{1:m},w_{1:n})$ where changepoints $\tau_{0:m}$ are marginalised
                \For{$t \gets m-1$ to $n-1$}
                  \State $L_{0,t} \gets 0$; $R_{m,t} \gets 1$
                \EndFor
                \State $L_{1,m-1} \gets 
                   \begin{array}[t]{@{}r@{}}
                          P(x_{1:n} | z_{1:m},\tau_{0:m}{=}(0,1,\ldots,m{-}1,n)) \\
                          {} \times P(\tau_{0:m}{=}(0,1,\ldots,m-1,n) | w_{1:n})
                   \end{array}$
                \For{$k \gets 2$ to $m-1$}
                  \State $L_{k,m-1} \gets L_{1,m-1}$
                \EndFor
                \State $P_{i,j} \gets P(x_j | x_{1:j-1}, z_i)$
                for $1 \leq i \leq m$ and $2 \leq j \leq n$
                \For{$t \gets m-1$ to $n-1$}
                  \For{$k \gets m-1$ downto $1$}
                    \State $R_{k,t} \gets 
                    \begin{array}[t]{@{}l@{}}
                            R_{k+1,t} \times 
                            \frac{P_{b(k,t)+1, k} \,\times\, w_{b(k,t)+1}}
                                 {P_{b(k,t)+1, k+1} \,\times\, w_{b(k,t)}}
                    \end{array}$
                  \EndFor
                \EndFor
                \For{$t \gets m$ to $n-1$}
                  \For{$k \gets 1$ to $m-1$}
                    \State $L_{k,t} \gets L_{k-1,t} + L_{k,t-1} \times R_{k,t-1}$
                  \EndFor
                \EndFor
                \State $L \gets 0$
                \For{$t \gets m-1$ to $n-1$}
                  \State $L \gets L + L_{m-1,t}$
                \EndFor
                \Return $L$
        \end{algorithmic}
\end{algorithm}

When combined with the idea of dynamic programming,
the recursive formulas in the propositions and the theorem give rise to an $\cO(mn)$ algorithm
for marginalising changepoints. We spell out this algorithm in Algorithm~\ref{algo:marginal}.
\begin{theorem}\label{thm:algo:correctness}
        Algorithm~\ref{algo:marginal} computes $P(x_{1:n}\,|\, z_{1:m}, w_{1:n})$,
        where $m$ is the number of changepoints and $n$ is the number of steps in given data.
        Moreover, the algorithm runs in $\cO(mn)$ time,
        if computing $P(x_j \,|\,x_{1:j-1},z_i)$ for all $1 \leq i \leq m$ and $2 \leq j \leq n$
        takes $\cO(mn)$ time (i.e., $P(x_j \,|\,x_{1:j-1},z_i)$ can be computed in $\cO(1)$
        amortised time).
\end{theorem}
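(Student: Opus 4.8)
The plan is to separate the two assertions --- correctness and the $\cO(mn)$ time bound --- and to obtain correctness from a straightforward induction over the dynamic-programming table that reduces everything to Proposition~\ref{prop:algo:finalL}, Theorem~\ref{thm:algo:recL}, and Proposition~\ref{prop:algo:recR}. For correctness, I would first establish the invariant that, by the time control reaches the final summation loop, each program variable $R_{k,t}$ (for $1 \leq k \leq m$, $m-1 \leq t < n$) and each $L_{k,t}$ (for $0 \leq k < m$, $m-1 \leq t < n$) holds exactly the value of the quantity with the same name defined in \S\ref{sec:result}. The induction follows the order in which the loops populate these variables: the base values $L_{0,t} = 0$ and $R_{m,t} = 1$ are written directly by the first loop; the base row $L_{k,m-1}$ is written for $k = 1$ by the explicit assignment and copied to $2 \leq k \leq m-1$, which is legitimate because the first equation of Theorem~\ref{thm:algo:recL} has a right-hand side independent of $k$; the loop ranging $k$ from $m-1$ down to $1$ computes $R_{k,t}$ from the already-available $R_{k+1,t}$ via exactly the recurrence of Proposition~\ref{prop:algo:recR}; and the final double loop, with $t$ and $k$ both increasing, computes $L_{k,t}$ via the second equation of Theorem~\ref{thm:algo:recL} from $L_{k-1,t}$ (same $t$, smaller $k$, already set), $L_{k,t-1}$ (smaller $t$, or the base row when $t = m$), and $R_{k,t-1}$ (set by the $R$-loop). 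The only genuine checking here is that each loop's traversal order is a linearisation of the dependency order of its recurrence, which is an immediate inspection.

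Granting the invariant, the algorithm returns $L = \sum_{t=m-1}^{n-1} L_{m-1,t}$, and Proposition~\ref{prop:algo:finalL} equates this sum with $\sum_{\tau_{0:m}} P(x_{1:n},\tau_{0:m} \mid z_{1:m}, w_{1:n})$, which by \eqref{eqn:goal} is $P(x_{1:n} \mid z_{1:m}, w_{1:n})$. This settles correctness.

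For the running-time claim, I would charge the cost loop by loop, treating every evaluation of $b(k,t)$ as a constant number of arithmetic operations and every access to the tables $P_{i,j}$, $w_{1:n}$, $L_{k,t}$, $R_{k,t}$ as $\cO(1)$. Under the stated hypothesis, filling the table $P_{i,j}$ costs $\cO(mn)$. The initialisation loop for $L_{0,t}$, $R_{m,t}$ is $\cO(n)$, the copy loop for $L_{k,m-1}$ is $\cO(m)$, and evaluating $L_{1,m-1}$ is $\cO(n)$, since for the degenerate changepoint vector $(0,1,\ldots,m-1,n)$ both $P(x_{1:n} \mid z_{1:m}, \tau_{0:m})$ and $P(\tau_{0:m} \mid w_{1:n})$ reduce to products of at most $n$ factors, each a lookup into $P_{i,j}$ or $w_{1:n}$. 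Each of the two main double loops executes $\cO(mn)$ iterations of $\cO(1)$ work, and the closing accumulation loop is $\cO(n)$. Adding these gives $\cO(mn)$.

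I do not expect a real obstacle; the argument is bookkeeping built on top of the three preceding results, and the only place demanding care is making the induction precise --- in particular, verifying that at the instant $L_{k,t} \gets L_{k-1,t} + L_{k,t-1} \times R_{k,t-1}$ executes, all three summands on the right already hold their intended values, which is exactly why the base row must be initialised for every $k$ in $\{1,\ldots,m-1\}$ and $L_{0,t}$ for every $t$, and why the $R$-loop must precede the $L$-loop. One further sentence should note that the recurrence for $R_{k,t}$, and hence the algorithm, implicitly assumes the emission densities $P(x_j \mid x_{1:j-1}, z_i)$ and the weights $w_j$ are strictly positive so that its divisions are well defined; positivity of $w_j$ is part of the model specification ($w_{1:n} \in \mathbb{R}^n_+$).
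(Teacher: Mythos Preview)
Your proposal is correct and follows essentially the same approach as the paper: both derive correctness by appeal to Proposition~\ref{prop:algo:finalL}, Theorem~\ref{thm:algo:recL}, and Proposition~\ref{prop:algo:recR}, and both obtain the $\cO(mn)$ bound by a loop-by-loop accounting. The paper's proof is considerably terser---it simply asserts that correctness follows from those three results and that each loop body runs in $\cO(1)$---whereas you spell out the induction showing every table entry is populated with its intended value before it is read, and you also flag the implicit positivity requirement on the emission densities needed for the $R$-recurrence; these elaborations are sound but not present in the paper.
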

\begin{proof}
        The correctness follows from Propositions~\ref{prop:algo:finalL} and \ref{prop:algo:recR}
        and Theorem~\ref{thm:algo:recL}.
        We analyse the run time as follows.
        The line 3 computes the RHS in $\cO(n)$.
        The line 6 runs in $\cO(mn)$ by the assumption.
        In the rest of the algorithm,
        nested loops and other loops 
        iterate $\cO(mn)$ times,
        and each line inside the loops runs in $\cO(1)$.
        So, the algorithm runs in $\cO(mn)$.
\end{proof}
\begin{theorem}\label{thm:algo:differentiability}
        When $P(x_j \,|\,x_{1:j-1},z_i)$ is differentiable with respect to $x_{1:j}$ and $z_i$, the result of 
        Algorithm~\ref{algo:marginal} is also differentiable with respect to $x_{1:n}$ and $z_{1:m}$, and can be 
        computed by applying automated differentiation to the algorithm. 
\end{theorem}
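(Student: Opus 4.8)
The plan is to exploit the fact that Algorithm~\ref{algo:marginal} is a \emph{straight-line} (branch-free) program: every loop has bounds determined solely by $m$ and $n$, never by the numerical values of $x_{1:n}$ or $z_{1:m}$, and the loop bodies contain no conditionals. Unrolling the loops therefore rewrites the algorithm as a fixed finite sequence of assignments, each applying one primitive operation to previously computed quantities, where the primitives are: (i) evaluation of a conditional density $P(x_j \mid x_{1:j-1}, z_i)$; (ii) real addition; (iii) real multiplication; (iv) real division; and (v) copying. Hence the returned number $L$ is, viewed as a function of $(x_{1:n}, z_{1:m})$, a finite composition of these primitives, with the fixed $w_{1:n}$ and $W$ entering only as constants. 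Inspecting the algorithm confirms this: the density factor of $L_{1,m-1}$ is a product of terms of the form $P(x_j \mid x_{1:j-1}, z_i)$ (by the definition of the model), the table $P_{i,j}$ stores such terms, and every remaining line is built from $+$, $\times$, $\div$, and copies.

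First I would observe that each primitive is differentiable on the region where it is actually evaluated. Operations (ii), (iii), and (v) are differentiable everywhere; a term $P(x_j \mid x_{1:j-1}, z_i)$ is differentiable with respect to $x_{1:j}$ and $z_i$ by hypothesis, hence differentiable as a function of the whole tuple $(x_{1:n}, z_{1:m})$ since it simply does not depend on the remaining coordinates; and division is differentiable wherever its denominator is nonzero. So the one thing to verify is that no division in the algorithm has a vanishing denominator. Divisions appear in exactly two spots: inside the factor $P(\tau_{0:m}{=}(0,1,\ldots,m{-}1,n)\mid w_{1:n}) = \frac{1}{W}\prod_{i=1}^{m-1} w_i$ used to form $L_{1,m-1}$, whose denominator $W = \sum_{\tau_{0:m}}\prod_{i=1}^{m-1} w_{\tau_i}$ is a sum of products of the positive numbers $w_{1:n}$ and is therefore strictly positive; and in the recursion for $R_{k,t}$, whose denominator is $P_{b(k,t)+1,k+1}\times w_{b(k,t)}$, where $w_{b(k,t)}>0$ because $w_{1:n}\in\R^n_+$ and $P_{b(k,t)+1,k+1} = P(x_{b(k,t)+1}\mid x_{1:b(k,t)}, z_{k+1})$ is positive — this last positivity is in any case already presupposed, since otherwise $R_{k,t}$ and the recursion of Theorem~\ref{thm:algo:recL} would be ill-defined.

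Granting this, differentiability of $L$ with respect to $x_{1:n}$ and $z_{1:m}$ is immediate from the chain rule applied to the finite composition of differentiable maps described above. For the second assertion I would appeal to the standard correctness guarantee of automated differentiation: forward- or reverse-mode AD run on a branch-free program (equivalently, a computational graph) whose primitives carry correct local derivative rules returns exactly the derivative of the composite function the program computes. Since Algorithm~\ref{algo:marginal} is such a program, and AD frameworks ship derivative rules for $+$, $\times$, $\div$ (and the user supplies, or the framework derives, the rule for $P(x_j\mid x_{1:j-1}, z_i)$), applying AD to the algorithm yields $\grad{x_{1:n}} L$ and $\grad{z_{1:m}} L$.

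I expect the only real obstacle to be the bookkeeping that rules out zero denominators at the two division sites; once the conditional densities are taken to be positive — which the quantities $R_{k,t}$ already require — the remainder is a routine invocation of the chain rule together with the standard semantics of AD on straight-line programs. One minor point worth making explicit in the write-up is that the initialisations $L_{0,t}\gets 0$, $R_{m,t}\gets 1$ and the copies $L_{k,m-1}\gets L_{1,m-1}$ contribute only constants or identity maps, and that the final accumulation $L\gets\sum_{t} L_{m-1,t}$ is iterated addition, so none of them disturbs differentiability.
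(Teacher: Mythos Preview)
Your argument is correct, but it differs from the paper's in how it establishes differentiability of the output. The paper does not analyse the algorithm's primitives at all for that part: it simply invokes correctness (Theorem~\ref{thm:algo:correctness}) to identify the output with the likelihood $P(x_{1:n}\mid z_{1:m},w_{1:n})$, and then observes that this likelihood, being a finite sum of products of the assumed-differentiable conditional densities, is differentiable in $x_{1:n}$ and $z_{1:m}$. You instead unroll the algorithm into a straight-line program and apply the chain rule operation by operation, which forces you to check the nonzero-denominator conditions explicitly. Both approaches then finish the automated-differentiation claim the same way, via the absence of branching. The paper's route is shorter because it reuses the correctness theorem; your route is more self-contained and arguably more careful, since it surfaces the positivity requirements on $W$, $w_{b(k,t)}$, and the conditional densities that the paper's proof leaves implicit.
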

\begin{proof}
        When $P(x_j \,|\,x_{1:j-1},z_i)$ is differentiable with respect to $x_{1:j}$ and $z_i$, the likelihood
        $P(x_{1:n}\,|\, z_{1:m}, w_{1:n})$ is differentiable with respect to $x_{1:n}$ and $z_{1:m}$. So, 
        the correctness of Algorithm~\ref{algo:marginal} in Theorem~\ref{thm:algo:correctness} implies
        the claimed differentiability. The other claim about the use of automated differentiation holds
        because Algorithm~\ref{algo:marginal} does not use any non-differentiable operations
        such as if statements.
\end{proof}

\subsection{Computation of normalising constant $W$}
\label{sec:norm-const}

So far we have assumed that weights $w_{1:n}$ are fixed
and the normalising constant $W$ for $P(\tau_{0:m} \,|\, w_{1:n})$ is known.
We now discharge the assumption. We present an algorithm 
for computing $W$ for given $w_{1:n}$. The algorithm uses dynamic programming,
runs in $\cO(mn)$ time, and is differentiable: the gradient of 
$W$ with respect to $w_{1:n}$ can be computed by applying automated differentiation
to the algorithm.

For all $k$ and $t$ with $1 \leq k < m$ and
$0 \leq t < n$, let
$S_{k,t} \defeq \sum_{\tau_{0:k},\, \tau_k \leq t\,} \prod_{i=1}^k w_{\tau_i}$
and $S_{0,t} \defeq 1$.
Note that $W = S_{m-1,n-1}$. So, it suffices to design an 
algorithm for computing $S_{k,t}$.  The next proposition describes how to do it.
\begin{proposition}
  \label{prop:norm-const:rec}
  For all $k,t$ with $1 \leq k < m$
  and $k \leq t < n$, we have $S_{k,t} =  S_{k,t-1} + S_{k-1,t-1} \times w_t$ and $S_{k,k-1}=0$.
\end{proposition}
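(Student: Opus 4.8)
The plan is to read off both identities directly from the definition of $S_{k,t}$ as the sum of $\prod_{i=1}^k w_{\tau_i}$ over all strictly increasing integer sequences $0 = \tau_0 < \tau_1 < \cdots < \tau_k$ with $\tau_k \le t$ (with the convention $S_{0,t} = 1$, corresponding to the empty prefix). No additional machinery is needed; this is a bookkeeping split of a finite sum.

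For the base case $S_{k,k-1} = 0$: a quick induction on $i$ (using $\tau_0 = 0$ and strict monotonicity over $\Z$) shows $\tau_i \ge i$ for every $i$, so in particular $\tau_k \ge k > k-1$. Hence no admissible sequence has $\tau_k \le k-1$, the index set is empty, and the sum is $0$. For the recursion, fix $k,t$ with $1 \le k < m$ and $k \le t < n$ and split the index set of $S_{k,t}$ according to the value of the last coordinate: either $\tau_k \le t-1$ or $\tau_k = t$. These two cases are disjoint and together exhaust $\{\tau_k \le t\}$, so $S_{k,t}$ is the sum of the two corresponding partial sums. The sequences with $\tau_k \le t-1$ contribute exactly $S_{k,t-1}$. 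For the sequences with $\tau_k = t$, factor $\prod_{i=1}^k w_{\tau_i} = w_t \cdot \prod_{i=1}^{k-1} w_{\tau_i}$; the residual prefix $\tau_{0:k-1}$ then ranges over precisely the strictly increasing sequences $0 = \tau_0 < \cdots < \tau_{k-1}$ with $\tau_{k-1} < t$, i.e.\ $\tau_{k-1} \le t-1$, so summing gives $w_t \cdot S_{k-1,t-1}$. When $k = 1$ the residual prefix is empty and this sum is $S_{0,t-1} = 1$, matching the convention. Adding the two contributions gives $S_{k,t} = S_{k,t-1} + S_{k-1,t-1} \times w_t$.

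I do not expect any genuine obstacle. The only points that need care are the boundary conventions: verifying $S_{k,k-1} = 0$ so that the recursion has a valid starting value, handling the $k = 1$ case where the prefix sum collapses to the constant $S_{0,\cdot} = 1$, and confirming that $\{\tau_k \le t-1\}$ and $\{\tau_k = t\}$ really do partition the index set $\{\tau_k \le t\}$.
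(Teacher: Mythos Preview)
Your proof is correct and follows essentially the same approach as the paper: split the sum defining $S_{k,t}$ according to whether $\tau_k \le t-1$ or $\tau_k = t$, then factor out $w_t$ in the second case to recover $S_{k-1,t-1}$. Your write-up is in fact more complete than the paper's, since you also justify the base case $S_{k,k-1}=0$ and spell out the $k=1$ boundary, both of which the paper's proof omits.
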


The recurrence relation for $S_{k,t}$ in Proposition~\ref{prop:norm-const:rec} 
yields a dynamic-programming algorithm for computing $W$ that runs in $\cO(mn)$ time.
The standard implementation of the algorithm does not use any non-differentiable operations.
So, its gradient can be computed by automated differentiation.

With this result at hand, we remove the assumption that
weights $w_{1:n}$ are fixed
and the normalising constant $W$ for $P(\tau_{0:m} \,|\, w_{1:n})$ is known a priori.
Algorithm~\ref{algo:marginal} no longer receives $W$ as its input.
It instead uses the algorithm described above
and computes $W$ from given $w_{1:n}$ before starting line~1.
Since the computation of $W$ takes $\cO(mn)$ time and can be differentiated
by automated differentiation, all the aforementioned results on Algorithm~\ref{algo:marginal}
(Theorems~\ref{thm:algo:correctness} and~\ref{thm:algo:differentiability})
still hold, and can be extended to cover the differentiability
of Algorithm~\ref{algo:marginal} with respect to $w_{1:n}$.


\section{Learning Model Parameters}
\label{sec:learning}

Our algorithm can extend the scope of gradient-based methods for posterior inference and model learning such that they apply to changepoint models despite their non-differentiability. In this section, we explain the model-learning application. We consider state-space models with changepoints that use neural networks. The goal is to learn appropriate neural-network parameters from given data.

We consider a special case of the model described in \S\ref{sec:result} that satisfies the following conditions.
\begin{enumerate}
        \item The latent parameter $z_i$ at $i \in \{1,\ldots,m\}$ has the fixed value $e_i$ in $\{0,1\}^m$ that has $1$ at the $i$-th position and $0$ everywhere else. Formally, this means that the prior $P(z_{1:m})$ is the Dirac distribution at $(e_1,e_2,\ldots,e_m)$.
        \item The random variable $x_j$ at $j \in \{1,\ldots,n\}$ consists of two parts,
          $x_j^S \in \cX_S$ for the latent state and $x_j^O \in \cX_O$ for the observed value.
          Thus, $x_j = (x_j^S,x_j^O)$ and $\cX = \cX_S \times \cX_O$.
        \item The probability distribution $P_\phi(x_j \,|\,x_{1:j-1},z_i)$ is parameterised by $\phi \in \mathbb{R}^p$ for some $p$, and has the form
                \[
                        P_\phi(x_j \,|\, x_{1:j-1},z_i) = P_\phi(x_j^O \,|\,x_j^S,z_i) P_\phi(x_j^S \,|\,x_{1:j-1}^S,z_i).
                \]
        Typically, $P_\phi$ is defined using a neural network, and $\phi$ denotes the weights of the network.
\end{enumerate}
\noindent
When the model satisfies these conditions, we have
\begin{align} 
        \nonumber
        P_\phi(x_{1:n}\,|\,w_{1:n}) 
        & {} = 
        \sum_{z_{1:m}} P_\phi(x_{1:n},z_{1:m}\,|\,w_{1:n}) 
        \\
        \nonumber
        & {} = 
        \sum_{z_{1:m}} P(z_{1:m}) P_\phi(x_{1:n}\,|\,z_{1:m},w_{1:n}) 
        \\
        \label{eqn:marginal-z}
        & {} = 
        P_\phi(x_{1:n}\,|\,(z_{1:m}=e_{1:m}),w_{1:n}).
\end{align}

By the learning of model parameters, we mean the problem of finding $\phi$ for given observations $x_{1:n}^O$ that makes the log probability of the observations $\log P_\phi(x_{1:n}^O\,|\,w_{1:n})$ large. A popular approach \citep{KingmaICLR14} is to maximise a lower bound of this log probability, called ELBO, approximately using a version of gradient ascent:
\begin{equation}\label{eqn:ELBO}
        \ELBO_{\phi,\theta} \defeq 
        \mathbb{E}_{Q_\theta(x_{1:n}^S|x^O_{1:n})}\!\left[\log\frac{P_\phi(x_{1:n}^S,x^O_{1:n} | w_{1:n})}{Q_\theta(x_{1:n}^S | x_{1:n}^O)}\right]
\end{equation}
where $Q_\theta$ is an approximating distribution for the posterior $P_\phi(x_{1:n}^S \,|\,x_{1:n}^O,w_{1:n})$
and $\theta \in \mathbb{R}^q$ denotes the parameters of this distribution, typically the weights of a neural network used to implement $Q_\theta$.

Our marginalisation algorithm makes it possible to optimise $\ELBO_{\theta,\phi}$ in \eqref{eqn:ELBO} by an efficient stochastic gradient-ascent method based on the so called reparameterisation trick \cite{KingmaICLR14,RezendeICML14,KucukelbirJMLR2017}.
Here we use our algorithm with fixed $m, w_{1:n}$ after setting $z_{1:m}$ to $e_{1:m}$.\footnote{$W$ is computed by the extension of our algorithm in \S\ref{sec:norm-const}. In fact, using the same extension, we can even treat $w_{1:n}$ as a part of $\phi$, and learn appropriate values for $w_{1:n}$.} So, only the $x_{1:n} = (x_{1:n}^S,x_{1:n}^O)$ part of its input may vary. To emphasise this, we write 
$\algo(\phi,x_{1:n}^S,x_{1:n}^O)$ for the result of the algorithm. 
Also, we make the usual assumption of the reparameterisation trick:
there are a $\theta$-independent distribution $Q(\epsilon)$ and a differentiable function $T_\theta(\epsilon,x^O_{1:n})$ such that $T_\theta(\epsilon,x^O_{1:n})$ for $\epsilon \sim Q(\epsilon)$ is distributed according to $Q_\theta(x_{1:n}^S \,|\,x_{1:n}^O)$. The next theorem shows that the gradient of $\ELBO$ can be estimated by computing the gradient through the execution of our algorithm via automated differentiation.

\begin{theorem}
        \label{thm:algo:differentiability:learning}
If $P_\phi(x_j \,|\,x_{1:j-1},z_i)$ is differentiable with respect to $x_{1:j}$ and $\phi$,
so is $\algo(\phi,x^S_{1:n},x_{1:n}^O)$. In that case, the gradient can be computed by automated differentiation.
\end{theorem}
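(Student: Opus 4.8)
This theorem is essentially a corollary of the differentiability result (Theorem~\ref{thm:algo:differentiability}) combined with the observation in~\eqref{eqn:marginal-z}, so the plan is mostly to trace through which arguments of the algorithm are fixed versus varying and check that all composed maps stay differentiable.

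\medskip

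\noindent\textbf{Proof proposal.}

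The plan is to reduce the claim to Theorem~\ref{thm:algo:differentiability} and the structural observation in~\eqref{eqn:marginal-z}. First I would recall that $\algo(\phi,x^S_{1:n},x^O_{1:n})$ is by definition the output of Algorithm~\ref{algo:marginal} run on the inputs $m$, $w_{1:n}$, the normalising constant $W$ (now computed internally by the $\cO(mn)$ dynamic program of \S\ref{sec:norm-const}, which is itself differentiable and here does not depend on $\phi$ or $x_{1:n}$), the fixed latent values $z_{1:m}=e_{1:m}$, and the full data $x_{1:n}=(x^S_{1:n},x^O_{1:n})$. By~\eqref{eqn:marginal-z}, this equals $P_\phi(x_{1:n}\,|\,w_{1:n})$, so differentiability of $\algo$ in $x_{1:n}$ and $\phi$ is exactly differentiability of $P_\phi(x_{1:n}\,|\,w_{1:n})$ in those variables.

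Next I would establish the latter. Writing out the joint as in \S\ref{sec:result} with $z_{1:m}$ collapsed to $e_{1:m}$, the marginal $P_\phi(x_{1:n}\,|\,w_{1:n})$ is a finite sum over changepoint tuples $\tau_{0:m}$ of products of factors $P_\phi(x_j\,|\,x_{1:j-1},z_i)$ (together with the $\tau$-prior factors that depend only on $w_{1:n}$). Each factor is, by hypothesis, differentiable with respect to $x_{1:j}$ and $\phi$; finite sums and products of differentiable functions are differentiable; hence the whole marginal is differentiable with respect to $x_{1:n}$ and $\phi$. Combined with the previous paragraph, $\algo(\phi,x^S_{1:n},x^O_{1:n})$ is differentiable with respect to $x^S_{1:n}$, $x^O_{1:n}$, and $\phi$. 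This is really the same reasoning as in the proof of Theorem~\ref{thm:algo:differentiability}, now instantiated with the parameterised family $P_\phi$ and with $z_{1:m}$ fixed; one should be a little careful that the factorisation $P_\phi(x_j\,|\,x_{1:j-1},z_i)=P_\phi(x_j^O\,|\,x_j^S,z_i)P_\phi(x_j^S\,|\,x_{1:j-1}^S,z_i)$ from condition~3 preserves differentiability in all of $x^S_{1:n}$ and $x^O_{1:n}$, which it does since it is a product of two differentiable factors.

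Finally, for the automated-differentiation claim, I would invoke the same argument as in Theorem~\ref{thm:algo:differentiability}: Algorithm~\ref{algo:marginal} (and the $W$-computing routine of \S\ref{sec:norm-const}) is built only from additions, multiplications, divisions, and evaluations of the differentiable factors $P_\phi$, with no branching on the values of $x_{1:n}$ or $\phi$; so reverse-mode automated differentiation applied to the straight-line code of the algorithm yields the correct gradient $\grad{(\phi,x^S_{1:n},x^O_{1:n})}\algo$. I do not anticipate a genuine obstacle here; the only mild subtlety is bookkeeping --- making explicit that fixing $z_{1:m}=e_{1:m}$ and using~\eqref{eqn:marginal-z} is what lets us drop the dependence on $z_{1:m}$ and replace it with dependence on $\phi$, and confirming that the internal $W$-subroutine, while differentiable in $w_{1:n}$, contributes nothing that breaks differentiability in $\phi$ or $x_{1:n}$ since it is constant in those arguments.
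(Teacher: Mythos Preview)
Your proposal is correct and follows essentially the same approach as the paper's proof: both argue that the marginalised likelihood is differentiable in $x_{1:n}$ and $\phi$ (being a finite sum of products of differentiable factors), invoke the correctness of Algorithm~\ref{algo:marginal} (Theorem~\ref{thm:algo:correctness}) to identify $\algo$ with that likelihood, and then appeal to the absence of non-differentiable operations for the automated-differentiation claim. Your detour through~\eqref{eqn:marginal-z} and the extra bookkeeping about the $W$-subroutine and the factorisation in condition~3 are harmless but unnecessary; the paper simply identifies $\algo$ with $P_\phi(x_{1:n}\,|\,z_{1:m},w_{1:n})$ directly via Theorem~\ref{thm:algo:correctness}.
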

\begin{proof}
        When $P_\phi(x_j \,|\,x_{1:j-1},z_i)$ is differentiable with respect to $x_{1:j}$ and $\phi$, the likelihood
        $P_\phi(x_{1:n}\,|\, z_{1:m}, w_{1:n})$ is differentiable with respect to $x_{1:n}$ and $\phi$. Thus,
        the correctness of Algorithm~\ref{algo:marginal} in Theorem~\ref{thm:algo:correctness} implies
        the claimed differentiability. The other claim about the use of automated differentiation comes
        from the fact that Algorithm~\ref{algo:marginal} does not use any non-differentiable operations.
\end{proof}

\begin{theorem}
When $P_\phi(x_j \,|\,x_{1:j-1},z_i)$ is differentiable with respect to $x_{1:j}$ and $\phi$ for all $j$ and $i$, 
\[
        \widehat{\MaRep} \defeq 
        \nabla_{\phi,\theta} \log\frac{\algo(\phi,T_\theta(\epsilon,x^O_{1:n}),x^O_{1:n})}{Q_\theta(T_\theta(\epsilon,x^O_{1:n})\,|\,x_{1:n}^O)}
        \ \ \, \text{for } \epsilon \sim Q(\epsilon)
\]
is an unbiased estimate for $\nabla_{\phi,\theta}\ELBO_{\phi,\theta}$, and can be computed via automated differentiation.
\end{theorem}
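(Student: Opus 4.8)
The plan is to combine the reparameterisation identity with the correctness and differentiability results already proved for $\algo$, reducing the claim to a single interchange of differentiation and expectation. First I would rewrite the $\ELBO$ so that $\algo$ appears inside its integrand. Since $x_j = (x_j^S,x_j^O)$, the joint density $P_\phi(x_{1:n}^S,x_{1:n}^O\,|\,w_{1:n})$ is just $P_\phi(x_{1:n}\,|\,w_{1:n})$, which by \eqref{eqn:marginal-z} equals $P_\phi(x_{1:n}\,|\,(z_{1:m}{=}e_{1:m}),w_{1:n})$; and by Theorem~\ref{thm:algo:correctness} this number is exactly the output of $\algo(\phi,x_{1:n}^S,x_{1:n}^O)$. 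Using the reparameterisation assumption that $T_\theta(\epsilon,x^O_{1:n})$ with $\epsilon\sim Q(\epsilon)$ is distributed as $Q_\theta(x_{1:n}^S\,|\,x_{1:n}^O)$, I would then apply the change-of-variables (law of the unconscious statistician) identity to obtain
\[
        \ELBO_{\phi,\theta} = \mathbb{E}_{Q(\epsilon)}\!\left[\log\frac{\algo(\phi,T_\theta(\epsilon,x^O_{1:n}),x^O_{1:n})}{Q_\theta(T_\theta(\epsilon,x^O_{1:n})\,|\,x_{1:n}^O)}\right].
\]
Note that $\algo$ returns a strictly positive value (it is a sum of products of conditional densities, each assumed positive), so the logarithm and its derivatives are well defined.

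The core of the argument is then the interchange $\nabla_{\phi,\theta}\,\mathbb{E}_{Q(\epsilon)}[\,g_{\phi,\theta}(\epsilon)\,] = \mathbb{E}_{Q(\epsilon)}[\,\nabla_{\phi,\theta}\,g_{\phi,\theta}(\epsilon)\,]$, where $g_{\phi,\theta}(\epsilon)$ is the integrand above. This is legitimate because the sampling law $Q(\epsilon)$ does not depend on $(\phi,\theta)$: under the standing hypotheses that $P_\phi(x_j\,|\,x_{1:j-1},z_i)$ is differentiable in $x_{1:j}$ and $\phi$, and that $T_\theta$ and $Q_\theta$ are differentiable, the map $(\phi,\theta)\mapsto g_{\phi,\theta}(\epsilon)$ is differentiable for $Q$-almost every $\epsilon$ by Theorem~\ref{thm:algo:differentiability:learning} and the chain rule, and one invokes the usual Leibniz-rule / dominated-convergence justification under the customary local integrable-dominating-function condition that accompanies any use of the reparameterisation trick. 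Granting this swap gives $\mathbb{E}_{Q(\epsilon)}[\widehat{\MaRep}] = \nabla_{\phi,\theta}\ELBO_{\phi,\theta}$, i.e.\ unbiasedness. The computability claim follows exactly as in Theorem~\ref{thm:algo:differentiability:learning}: $\widehat{\MaRep}$ is the gradient of a composition of $\algo$ (built only from differentiable operations), the differentiable maps $T_\theta$ and $Q_\theta$, a division, and a logarithm, so reverse-mode automated differentiation applied to that composition produces $\widehat{\MaRep}$.

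I expect the only genuine obstacle to be the measure-theoretic justification of the gradient--expectation exchange; following standard practice in the variational-inference literature I would handle it by stating a mild regularity (dominating-function) assumption rather than attempting to derive it from differentiability of $P_\phi$ alone, since the hypotheses as given control only smoothness, not integrability of the derivatives.
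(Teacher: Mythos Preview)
Your proposal is correct and follows essentially the same route as the paper: rewrite $P_\phi(x_{1:n}^S,x_{1:n}^O\mid w_{1:n})$ via \eqref{eqn:marginal-z} and Theorem~\ref{thm:algo:correctness} so that $\algo$ appears in the $\ELBO$ integrand, then invoke the standard reparameterisation argument together with the differentiability of $\algo$ from Theorem~\ref{thm:algo:differentiability:learning}. If anything you are more explicit than the paper, which simply cites ``the usual unbiasedness argument of the reparameterisation trick'' where you spell out the change-of-variables step and the gradient--expectation interchange (and correctly flag the regularity assumption underpinning that interchange).
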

\begin{proof} 
$P_\phi(x_{1:n}^S,x^O_{1:n}|w_{1:n}) = P_\phi(x_{1:n}|(z_{1:m}{=}e_{1:m}),w_{1:n})$ by \eqref{eqn:marginal-z}. The RHS of the equation equals $\algo(\phi,x^S_{1:n},x^O_{1:n})$ by Theorem~\ref{thm:algo:correctness} and the definition of $\algo$.
So, $\ELBO_{\phi,\theta} = \mathbb{E}_{Q_\theta(x_{1:n}^S|x_{1:n}^O)}\left[\log\frac{\algo(\phi,x^S_{1:n},x^O_{1:n})}{Q_\theta(x_{1:n}^S \,|\,x_{1:n}^O)}\right]$.
The usual unbiasedness argument of the reparameterisation trick and the differentiability of $\algo(\phi, x_{1:n}^S,x^O_{1:n})$ with respect to $x_{1:n}$ and $\phi$ (Theorem~\ref{thm:algo:differentiability:learning}) give the claimed conclusion.
\end{proof}

\section{Experimental Evaluation}
\label{sec:empirical}


As mentioned, another important application of our algorithm is posterior inference.
In this section, we report the findings from our experiments with this application,
which show the benefits of having an efficient 
differentiable marginalisation algorithm for posterior inference.



Hamiltonian Monte Carlo (HMC) \citep{Duane87,neal2011mcmc} is one of the most effective algorithms
for sampling from posterior distributions, especially on high dimensional spaces. However, it
cannot be applied to models with changepoints directly. This is because HMC requires that a model have a differentiable density,
but changepoint models do not meet this requirement due to discrete changepoints. 

One way of addressing this non-differentiability issue 
is to use our algorithm and marginalise changepoints. Since our algorithm is differentiable, the resulting models have differentiable densities, and we can analyse their posteriors using HMC.
We tested this approach experimentally, aiming at answering the following questions:
\begin{itemize}
\item
  RQ1 (Speed):
  How fast is our marginalisation algorithm when used for HMC? 
\item
  RQ2 (Sample quality):
  Is HMC with our marginalisation algorithm better at generating good posterior samples
  than other Markov Chain Monte Carlo (MCMC) algorithms that do not use gradients
  nor marginalisation?
\end{itemize}

\begin{figure}[t]
	\centering
	\begin{subfigure}[b]{.49\columnwidth}
      \centering%
	  \includegraphics[width=\columnwidth]{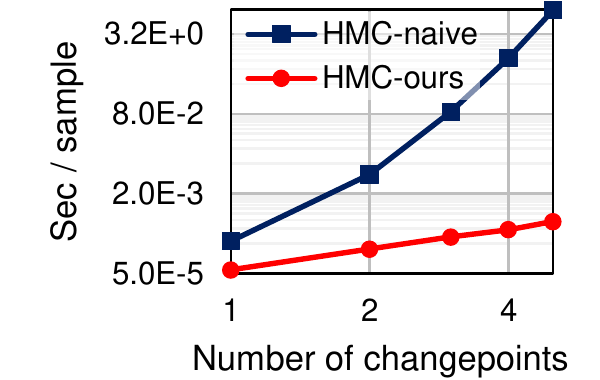}
      \captionsetup{justification=centering}
	  \caption{
        Time per sample vs\@. $m$ \\ when $n=50$.
      }
	  \label{fig:time-per-sample-by-m}
	\end{subfigure}
    \,
	\begin{subfigure}[b]{.49\columnwidth}
      \centering%
	  \includegraphics[width=\columnwidth]{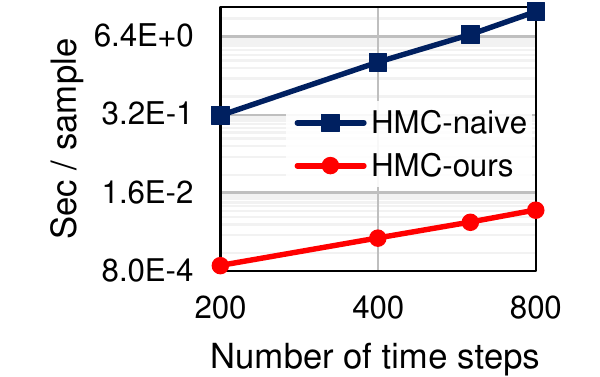}
      \captionsetup{justification=centering}
	  \caption{
        Time per sample vs\@. $n$ \\ when $m=2$.
      }
	  \label{fig:time-per-sample-by-n}
	\end{subfigure}
	\caption{
      Computation time of $\HMCnaive$ and $\HMCours$
      for different $m$ and $n$.
	  The $x$- and $y$-axes are in log-scale.
    }
	\label{fig:efficincy-comparison}
\end{figure}


We evaluated different inference algorithms on synthetic and real-world data for $\cX = \R$.
The synthetic data were generated as follows:
we fixed
parameters $(n, m^*, \mu^*_{1:m^*},$ $ \sigma^*_{1:m^*}, \tau^*_{0:m^*})$,
and then sampled each data point $x_j$ ($1 \leq j \leq n$) in the $i$-th segment 
(i.e., $\tau^*_{i-1} < j \leq \tau^*_i$) independently from a Gaussian distribution with mean $\mu^*_i$ and
standard deviation $\sigma^*_i$.
The changepoint model for analysing the synthetic data is:
$m=m^*$, $\cZ=\R\times\R_{+}$, $w_{1:n}=(1,\dots,1)$, $P(z_i\,{=}\,(\mu_i,\sigma_i)) =
\textrm{Normal}(\mu_i|5,10) \times \textrm{LogNormal}(\sigma_i|0,2)$, and
$P(x_j|x_{1:j-1},z_i\,{=}\,(\mu_i,\sigma_i)) = \textrm{Normal}(x_j|\mu_i,\sigma_i)$.
For the real-world application, we used well-log data \cite{fearnhead2006exact},
whose data points 
represent some physical quantity measured by a probe diving in a wellbore.
We took a part of the well-log data
by removing outliers and choosing 1000 consecutive data points
(Figure~\ref{fig:real-data-RQ2}).
The changepoint model for the well-log data is the same as the above except:
$m=13$ and
$P(z_i\,{=}\,(\mu_i,\sigma_i)) =
\textrm{Normal}(\mu_i|120000,20000) \times \textrm{LogNormal}(\sigma_i|8.5,0.5)$.

Our goal is to infer the posterior distribution of latent parameters $z_{1:m}$ and changepoints $\tau_{0:m}$.
For this task,
we compared four posterior-inference algorithms: $\HMCnaive$, $\HMCours$, $\IPMCMC$, and $\LMH$.
$\HMCnaive$ (resp\@. $\HMCours$) generates samples as follows:
it forms a probabilistic model $P(x_{1:n},z_{1:m}\,|\,w_{1:n})$
where $\tau_{0:m}$ are marginalised out
by a naive marginalisation scheme (resp\@. by our marginalisation algorithm);
samples $z_{1:m}$ from $P(z_{1:m}\,|\,w_{1:n},x_{1,n})$
by running HMC on $P(x_{1:n},z_{1:m}\,|$ $\,w_{1:n})$;
finally samples $\tau_{0:m}$ from $P(\tau_{0:m} \,|\, w_{1:n},x_{1:n}, z_{1:m})$
using dynamic programming.
$\IPMCMC$ and $\LMH$ jointly sample $z_{1:m}$ and $\tau_{0:m}$ from
$P(z_{1:m},\tau_{0:m}\,|\,w_{1:n},x_{1:n})$
by running the variants of the Metropolis-Hastings algorithm called interacting particle MCMC 
(IPMCMC) \cite{rainforth2016interacting} and lightweight Metropolis-Hastings (LMH) \citep{Wingate2011}, respectively. 
IPMCMC and LMH are applicable to models with discrete or non-differentiable random variables. They neither exploit gradients nor marginalise out any random variables.

For $\HMCnaive$ and $\HMCours$,
we used
the No-U-Turn Sampler (NUTS) \citep{hoffman2014no} 
in PyStan \citep{carpenter2017stan} with default hyper-parameters, except for $\text{adapt\_delta}\,{=}\,0.95$.
For $\IPMCMC$ and $\LMH$, we used the implementations in Anglican \citep{wood-aistats-2014,tolpin2016design} 
with default hyper-parameters, except for the following $\IPMCMC$ setup: $\text{number-of-nodes}\,{=}\,8$ for both
the synthetic and well-log data, and $\text{pool}\,{=}\,8$ for the well-log data.
For RQ1, we compared the time taken to generate a single posterior sample by 
$\HMCnaive$ and $\HMCours$.
%
For RQ2, we compared the quality of posterior samples from 
$\HMCours$, $\IPMCMC$, and $\LMH$,
by means of the following quantities:
estimates of the first and second moments,
the Gelman-Rubin convergence statistic ($\hat{R}$)
\citep{gelman1992inference,brooks1998general},
and effective sample size (ESS).
The experiments were performed on a Ubuntu 16.04 machine 
with Intel i7-7700 CPU with 16GB of memory.

\begin{figure}[t]
	\centering
	\begin{subfigure}[b]{\columnwidth}
      \centering
	  \includegraphics[width=\textwidth]{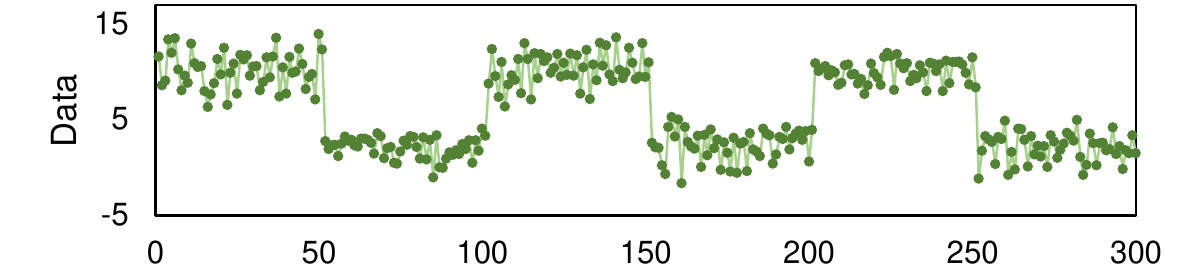} 
      \caption{Synthetic data.}
	  \label{fig:synthetic-data-RQ2}
	\end{subfigure}
    \\[0.3em]
	\begin{subfigure}[b]{\columnwidth}
      \centering
	  \includegraphics[width=\textwidth]{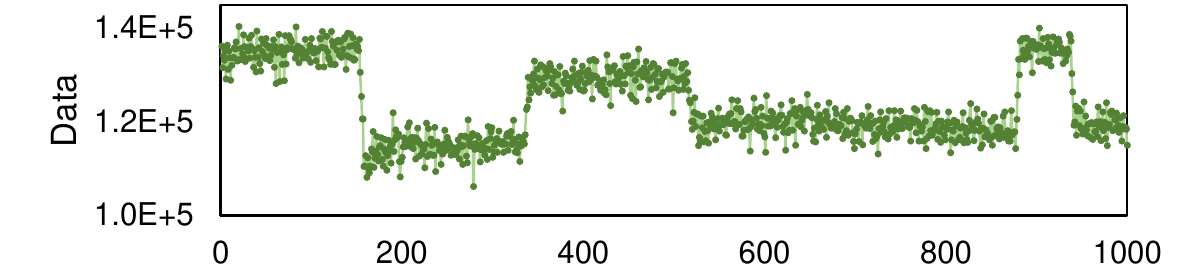}
      \caption{Part of well-log data.}
	  \label{fig:real-data-RQ2}
	\end{subfigure}
	\caption{Synthetic and real-world data for RQ2. The $x$-axis represents time steps.}
	\label{fig:data-RQ2}
\end{figure}

\vspace{1.5mm}
\noindent{\bf Results for RQ1.}\
We measured the average time per sample taken by $\HMCnaive$ and $\HMCours$ for different
numbers of changepoints and time steps:
for fixed $n=50$, we varied $m^*=m$ from $1$ to $5$,
and for fixed $m^*=m=2$, we varied $n\in\{200,400,600,800\}$. 
The details of the parameter values we used appear in the Appendix. 
We ran five independent runs of the NUTS algorithm, and averaged the time spent without burn-in samples.

Figures~\ref{fig:time-per-sample-by-m} and~\ref{fig:time-per-sample-by-n}
show how the time depends on $m$ and $n$, respectively, in the two approaches.
In the log-log plots,
$\log(\text{time})$ of $\HMCours$ is linear in both $\log m$ and $\log n$,
due to its time complexity $\cO(mn)$.
On the other hand,
$\log(\text{time})$ of $\HMCnaive$ is exponential in $\log m$,
and linear in $\log n$ yet with a slope nearly two times larger than that for $\HMCours$,
because of its time complexity $\cO(n^m)$.
Overall, the results show that $\HMCnaive$ quickly becomes infeasible as the number of changepoints or time steps
grows, but $\HMCours$ avoids such an upsurge by virtue of having the linear relationship between the two varying
factors and the time per sample.

\vspace{1.5mm}
\noindent{\bf Results for RQ2.}\
Figure~\ref{fig:data-RQ2} shows the synthetic and real-world data used in answering RQ2.
The synthetic data was generated with parameters
$n=300$, $m^*=6$, $\mu^*_{1:m^*}=(10,$ $2,10,2,10,2)$,
$\sigma^*_{1:m^*}=(1.8,1.1,1.7,1.5,1.2,1.3)$, and 
$\tau^*_{0:m^*}$ $=(0,50,$ $100,150,200,250,300)$.

For each chain of $\HMCours$, we generated 30K samples with random initialisation (when possible) after burning in 1K samples.
We computed $\hat{R}$ and ESS for each latent parameter and changepoint using three chains, and repeated this five times as the 
$\hat{R}$ and ESS results varied across different runs.
We also estimated the sum of the first moments of $(z_{1:m}, \tau_{1:m-1})$ and that of the second moments
of them using the same $15$ chains.\footnote{%
  Concretely, we estimated 
  $\mathbb{E}_{P(z_{1:m},\tau_{1:m}\,|\,x_{1:n})}
  \big[\big(\sum_{i=1}^m \mu_{i}+\sigma_{i}+\tau_i\big)-n\big]$ and
  $\mathbb{E}_{P(z_{1:m},\tau_{1:m}\,|\,x_{1:n})}
  \big[\big(\sum_{i=1}^m \mu_{i}^2+\sigma_{i}^2+\tau_i^2\big)-n^2\big]$.
}
The same setup was applied to $\IPMCMC$ and $\LMH$ except the following:
since they sample faster than $\HMCours$, we let $\IPMCMC$ and $\LMH$ generate
270K (resp. 1855K) and 200K (resp. 1750K) samples, respectively, for synthetic data (resp. well-log data) so that every
run of them spends more time than the corresponding slowest $\HMCours$ run.

\begin{figure}[t]
  \centering
  \includegraphics[width=\columnwidth]{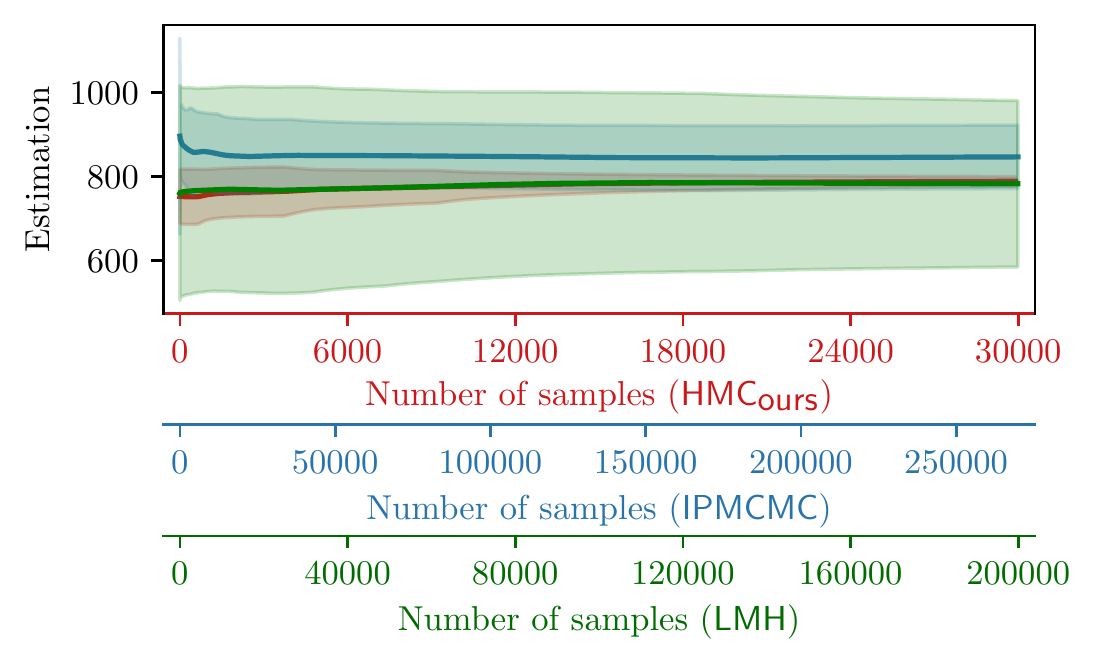}%
  \caption{%
    Convergence plots for estimating the sum of the first moments for synthetic data,
    with $\HMCours$ (red), $\IPMCMC$ (blue), and $\LMH$ (green).
    Each $x$-axis represents the number of samples
    generated by the corresponding procedure,
    and the $y$-axis denotes estimated values.
    Each darker line shows the mean value at each point,
    and the corresponding error band around it shows the standard deviation.
  } 
  \label{fig:estimation-1st-moments}
\end{figure}

We first discuss results on synthetic data.
Figure~\ref{fig:estimation-1st-moments} shows the estimates for the sum of the first moments by $\HMCours$, $\IPMCMC$,
and $\LMH$.
$\HMCours$ shows a gradual trend towards convergence, while $\IPMCMC$ and $\LMH$ exhibit substantial variation across runs
without convergence.
We obtained similar results for the sum of the second moments
(see the Appendix). 
Table~\ref{tab:moments-estimation-additional} shows the ranges of the time
taken by the Markov chains in Figure~\ref{fig:estimation-1st-moments-well-log},
and the ranges of the estimates from the chains.

Figure~\ref{fig:boxplot-rhat} shows the $\hat{R}$ values from $\HMCours$, $\IPMCMC$, and $\LMH$.
Three out of five experiments with $\HMCours$ were satisfactory in the sense that the $\hat{R}$ statistics for
\textit{all} the latent ($z_{1:m}$, $\tau_{1:m-1}$) were between $0.9$ and $1.1$.
Though the $\hat{R}$ statistics for some of the latent were over
$1.1$ in the other two experiments, most of the $\hat{R}$ values were less than or close to $1.1$.
On the other hand, none of the $\IPMCMC$ and $\LMH$ experiments placed $\hat{R}$ values for all the latent,
within the interval. Also the values were farther from the interval.

Figure~\ref{fig:boxplot-ess} shows $\ln(\text{ESS})$ from $\HMCours$, $\IPMCMC$, and $\LMH$
in a similar manner. 
$\HMCours$ produced significantly higher ESS values than $\LMH$, demonstrating that $\HMCours$ draws samples
more effectively than $\LMH$ within a fixed amount of time.
However, $\HMCours$ was not superior in ESS to $\IPMCMC$ 
despite the excellence in $\hat{R}$. We conjecture that this is due to $\IPMCMC$ running eight parallel nodes independently, each with two particles to propose samples.

\begin{table}[t]
  \caption{The ranges of the time (sec) taken by the three approaches and the ranges of the estimates computed by them, for synthetic data.
	For the estimated sum of the first/second moments (i.e., the third/fourth
	row), we computed the values at $510.8$ sec (the minimum time taken among all the runs) from each Markov chain, 
	assuming that generating each sample (in a chain) took an equal amount of time.}
\centering
\begin{adjustbox}{width=\columnwidth,center}
  \aboverulesep=0.3ex
  \belowrulesep=0.3ex
	\begin{tabular}{cccc}
      \toprule
      &$\HMCours$&$\IPMCMC$&$\LMH$
      \\ \midrule
	  Time&$[510.8,1043.9]$&$[1053.2,1170.6]$&$[1092.2,1105.2]$
      \\ \midrule
	  1st&$[746.9,794.2]$&$[651.3,993.8]$&$[415.4,1164.0]$
      \\ \midrule 
	  2nd&\makecell{$[1.28$E+$05,$\\$1.39$E+$05]$}&\makecell{$[1.12$E+$05,$\\$1.89$E+$05]$}
	  &\makecell{$[3.56$E+$04,$\\$2.62$E+$05]$}
      \\ \bottomrule
	\end{tabular}
\end{adjustbox}
\label{tab:moments-estimation-additional}
\end{table}

\begin{figure*}[t]
	\centering
	\begin{subfigure}[b]{.8\textwidth}
                \centering
		\includegraphics[width=\textwidth]{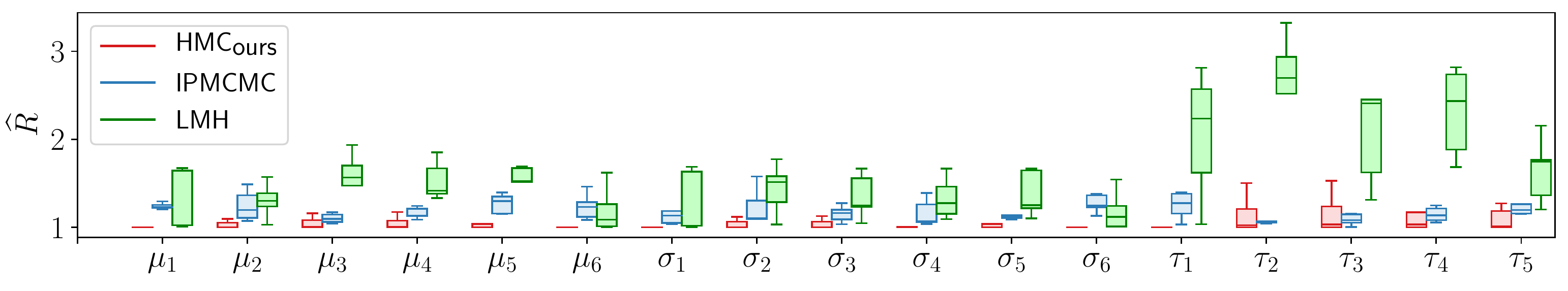}
		\caption{$\hat{R}$.}
		\label{fig:boxplot-rhat}
	\end{subfigure}
    \\
	\begin{subfigure}[b]{.8\textwidth}
                \centering
		\includegraphics[width=\textwidth]{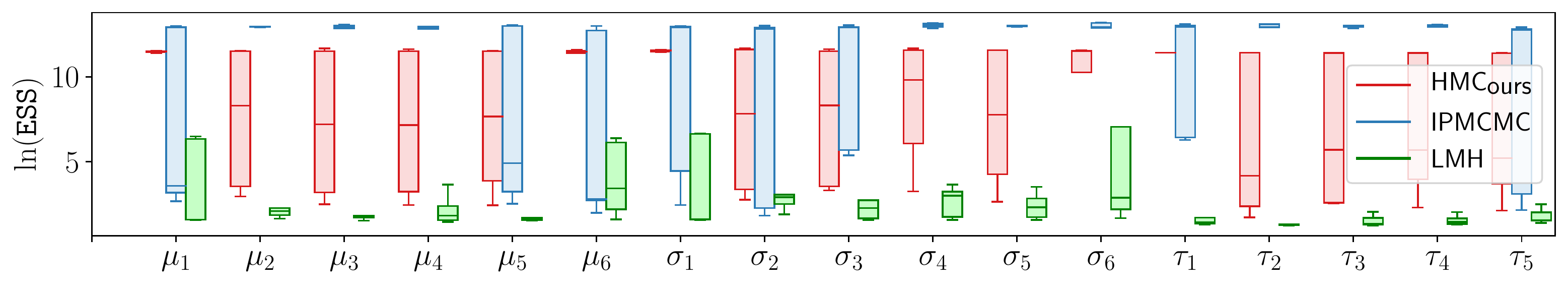}
		\caption{$\ln(\text{ESS})$.}
		\label{fig:boxplot-ess}
	\end{subfigure}
    \captionsetup{justification=centering}
	\caption{%
      $\hat{R}$ and ESS
      from $\HMCours$ (red), $\IPMCMC$ (blue), and $\LMH$ (green)
      for synthetic data.
      \\
      The $x$-axis denotes the latent parameters and changepoints,
      and the $y$-axis $\hat{R}$ or $\ln(\text{ESS})$ values.
    }
	\label{fig:rhat-ess-results}
\end{figure*}

\begin{figure}[t]
  \centering
  \includegraphics[width=\columnwidth]{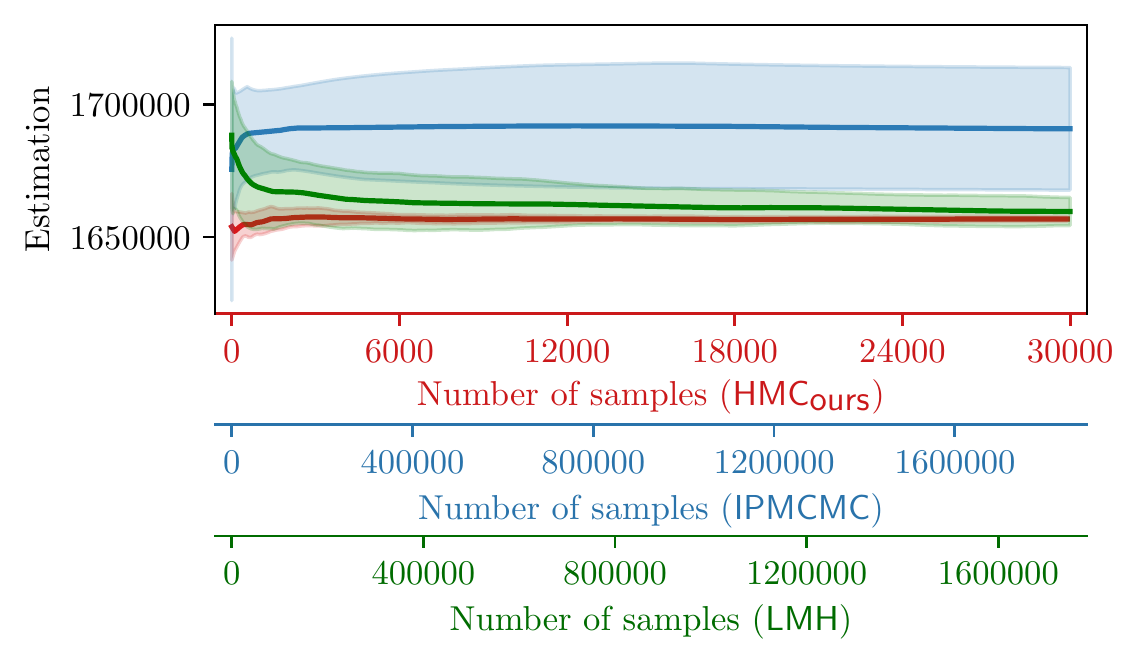}
  \caption{%
    Convergence plots for estimating the sum of the first moments
    for well-log data.
    See the caption of Figure~\ref{fig:estimation-1st-moments} for details.
  }
  \label{fig:estimation-1st-moments-well-log}
\end{figure}


  For well-log data,
  $\HMCours$ similarly outperformed the other two in terms of
  convergence, $\hat{R}$, and ESS; $\hat{R}$ for \textit{all} the latent ($z_{1:m}$, 
  $\tau_{1:m-1}$) were between $0.9$ and $1.1$ in \textit{all} five experiments. One exception is that $\IPMCMC$ showed much higher ESS than $\HMCours$, although it failed to converge
  (Figure~\ref{fig:estimation-1st-moments-well-log}). We think that this is again due to $\IPMCMC$'s tendency of generating independent samples. The full results are in the Appendix. 


  
  We remark that $\HMCours$ performed poorly on well-log data with the same model but smaller $m$
  (e.g., 6 instead of 13). According to our manual inspection, this was because $\HMCours$ in this case got stuck at
  some modes, failing to generate samples from other modes.
  We think that increasing $m$ (up to some point) lessens the barriers between modes in the marginal space; for large enough $m$, only a small amount of 
  $P(z_{1:m} \,|\, x_{1:n},w_{1:n})$ should be reduced to change some of $z_{1:m}$.
  One practical lesson is that having enough changepoints may 
  help analyse real-world data using Bayesian inference and changepoint models.

\section{Related Work and Conclusion}
\label{sec:related}

{\bf Related work.}
Modelling and reasoning about time-series data with changepoints is a well-established topic in statistics and machine learning \citep{eckley2011analysis,2018arXiv180100718T}.
We discuss two lines of research most relevant to ours. The first is the work by Fearnhead and his colleagues \cite{fearnhead2006exact,fearnhead2005exact,fearnhead2007line}, which is further extended to multi-dimensional time-series data \cite{xuan2007bayesian}. 
\citet{fearnhead2006exact} proposed an $\cO(n^2)$-time algorithm for generating changepoint positions from the posterior of a given 
changepoint model in a particular form, where $n$ is the number of time steps. Their algorithm also uses a form of dynamic 
programming on certain recursive formulas, but it does not target at marginalisation. Its conversion for marginalisation is possible, but
inherits this $\cO(n^2)$ time complexity. The other work is \citet{chib1995marginal}'s technique for estimating the model evidence of changepoint models \cite{chib1998marginal}, whose
properties, such as sensitivity on chosen parameters, is analysed by \citet{bauwens2012marginal}. The technique is based on Gibbs sampling, and it is unclear whether the technique leads to a differentiable algorithmic component that can be used in the context of gradient-based algorithms.

The observation that the summation version of dynamic programming is differentiable is a folklore. For instance, \citet{Eisner16} points out the differentiability of the inside algorithm, which is a classic dynamic-programming-based algorithm in natural language processing (NLP). He then explains how to derive several well-known NLP algorithms by differentiating the inside algorithm or its variants. However, we do not know of existing work that uses such dynamic programming algorithms for the type of application we consider in the paper: converting non-differentiable models to differentiable models via marginalisation in the context of posterior inference and model learning. The optimisation version of dynamic programming is not differentiable, and its differentiable relaxation has been studied recently \cite{Corr-ICLR19,MenschB18}.

\vspace{1.5mm}
\noindent{\bf Conclusion.}\ 
We presented a differentiable $\cO(mn)$-time algorithm for marginalising changepoints in time-series models, where $m$ is the number of changepoints and $n$ the number of time steps. The algorithm can be used to convert a class of non-differentiable time-series models to differentiable ones, so that the resulting models can be analysed by gradient-based techniques. We described two applications of this conversion, posterior inference with HMC and model-parameter learning with reparameterisation gradient estimator, and experimentally showed the benefits of using the algorithm in the former posterior-inference application.

\commentout{
Most relevant line of work
\begin{itemize}
	\item Exact and efficient Bayesian inference for multiple changepoint problems~\cite{fearnhead2006exact} (Just for reference, two other
	papers of the same author: \cite{fearnhead2005exact,fearnhead2007line})
	\item Discussion in Section 1 of \cite{fearnhead2006exact}
	\item \cite{xuan2007bayesian} extended \cite{fearnhead2006exact,fearnhead2005exact,fearnhead2007line} to multiple dimensional series.
\end{itemize}

Marginal likelihood computation for change point models
\begin{itemize}
	\item Marginal likelihood from the Gibbs output~\cite{chib1995marginal}
	\begin{itemize}
		\item Computed model evidence using Gibbs sampling for the posterior density and basic marginal likelihood identity (BMI) (i.e., $p(y)=\frac{p(x)p(y|x)}{p(x|y)}$). Chib's method.
	\end{itemize}
	\item On marginal likelihood computation in change-point models~\cite{bauwens2012marginal}
	\begin{itemize}
		\item Motivated by\cite{chib1995marginal}, provided an analysis on the sensitivity of choice of the parameter for estimating
		the posterior (and so evidence).
	\end{itemize}
\end{itemize}

Not yet classified.
\begin{itemize}
	\item Bayesian model selection for change point detection and clustering~\cite{mazhar2018bayesian}
	\begin{itemize}
		\item \url{http://proceedings.mlr.press/v80/mazhar18a/mazhar18a.pdf}
		\item Went one step further from traditional change point detection by additionally clustering the segments.
		\item Restricted themselves to an i.i.d. Gaussian sequence model for the data with known variance.
		\item Different from their work, our focus is not to detect change points\todo{check the setting again. Do they somehow compute model 
			evidence?}. Rather, we propose how to marginalize out all
		the possible configurations of change points efficiently, thereby accelerating the computation
		of model evidence.
	\end{itemize}
	\item Bayesian model selection for complex geological structures using polynomial chaos proxy~\cite{bazargan2017bayesian} (Section 3 - p.536 - 2nd paragraph, and Section 4)
\end{itemize}

\ks{The list of work below is not directly related to ours. Probably we can remove them.}

Numerical methods for Bayesian model selection (approximately computing Bayes factor, or model evidence)
\begin{itemize}
	\item MCMC~\cite{carlin1995bayesian,gallagher2009markov,madigan1994model,godsill1998relationship,bauwens2014marginal,chib1995marginal,
		bauwens2012marginal,gelfand1994bayesian,carlin1992hierarchical}
	\item Reversible jump approach~\cite{green1995reversible,richardson1997bayesian}
	\item Path sampling~\cite{andrieu2004computational}
	\item Laplace's method~\cite{chipman2001practical,tierney1986accurate,gelfand1994bayesian}
	\item Bayesian optimization for automated model selection~\cite{malkomes2016bayesian}
	\begin{itemize}
		\item \url{https://papers.nips.cc/paper/6466-bayesian-optimization-for-automated-model-selection.pdf}
		\item Kernel selection (for dataset) in the context of kernel-based nonparametric methods.
		\item Tree search based on kernel grammar, trying to maximize evidence.
		\item Developed a better search method: use of Bayesian optimization, treating the model evidence as the objective function.
		\item Their contribution is acceleration of search in the (infinite) model space using Bayesian optimization,
		not improvement of efficiency of evidence computation. Our technique is different from their work in that
		our main contribution is efficient computation of evidence (i.e., fast marginalization) for change point models.
	\end{itemize}
	\item A Review paper for Bayesian evidence computation:~\cite{knuth2015bayesian}
	\item comparison papers: \cite{liu2012marginal} compared Gelfand-Dey's~\cite{gelfand1994bayesian} and Chib's~\cite{chib1995marginal}
	methods for model comparison.
\end{itemize}
}

\vspace{1.5mm}
\noindent{\bf Acknowledgements.}
The authors were supported by the Engineering Research Center Program through the National Research Foundation of Korea (NRF) funded by the Korean Government MSIT (NRF-2018R1A5A1059921), and also by Next-Generation Information Computing Development Program through the National Research Foundation of Korea (NRF) funded by the Ministry of Science, ICT (2017M3C4A7068177).

\bibliography{refs}

\begin{thebibliography}{}

\bibitem[\protect\citeauthoryear{Abadi \bgroup et al\mbox.\egroup
  }{2016}]{tensorflow2016-osdi}
Abadi, M.; Barham, P.; Chen, J.; Chen, Z.; Davis, A.; Dean, J.; Devin, M.;
  Ghemawat, S.; Irving, G.; Isard, M.; Kudlur, M.; Levenberg, J.; Monga, R.;
  Moore, S.; Murray, D.~G.; Steiner, B.; Tucker, P.~A.; Vasudevan, V.; Warden,
  P.; Wicke, M.; Yu, Y.; and Zheng, X.
\newblock 2016.
\newblock Tensorflow: {A} system for large-scale machine learning.
\newblock In {\em OSDI},  265--283.

\bibitem[\protect\citeauthoryear{Bauwens and
  Rombouts}{2012}]{bauwens2012marginal}
Bauwens, L., and Rombouts, J.~V.
\newblock 2012.
\newblock On marginal likelihood computation in change-point models.
\newblock {\em Computational Statistics \& Data Analysis} 56(11):3415--3429.

\bibitem[\protect\citeauthoryear{Brooks and Gelman}{1998}]{brooks1998general}
Brooks, S.~P., and Gelman, A.
\newblock 1998.
\newblock General methods for monitoring convergence of iterative simulations.
\newblock {\em J. Comput. Graph. Stat.} 7(4):434--455.

\bibitem[\protect\citeauthoryear{Carpenter \bgroup et al\mbox.\egroup
  }{2017}]{carpenter2017stan}
Carpenter, B.; Gelman, A.; Hoffman, M.~D.; Lee, D.; Goodrich, B.; Betancourt,
  M.; Brubaker, M.; Guo, J.; Li, P.; and Riddell, A.
\newblock 2017.
\newblock Stan: A probabilistic programming language.
\newblock {\em Journal of Statistical Software} 76(1).

\bibitem[\protect\citeauthoryear{Chib}{1995}]{chib1995marginal}
Chib, S.
\newblock 1995.
\newblock Marginal likelihood from the gibbs output.
\newblock {\em J. Am. Stat. Assoc.} 90(432):1313--1321.

\bibitem[\protect\citeauthoryear{Chib}{1998}]{chib1998marginal}
Chib, S.
\newblock 1998.
\newblock Estimation and comparison of multiple change-point models.
\newblock {\em J. Econometrics.} 86(2):221--241.

\bibitem[\protect\citeauthoryear{Corro and Titov}{2019}]{Corr-ICLR19}
Corro, C., and Titov, I.
\newblock 2019.
\newblock Differentiable perturb-and-parse: Semi-supervised parsing with a
  structured variational autoencoder.
\newblock In {\em ICLR}.

\bibitem[\protect\citeauthoryear{Duane \bgroup et al\mbox.\egroup
  }{1987}]{Duane87}
Duane, S.; Kennedy, A.~D.; Pendleton, B.~J.; and Roweth, D.
\newblock 1987.
\newblock {H}ybrid {M}onte {C}arlo.
\newblock {\em {P}hysics {L}etters {B}} 195:216--222.

\bibitem[\protect\citeauthoryear{Eckley, Fearnhead, and
  Killick}{2011}]{eckley2011analysis}
Eckley, I.~A.; Fearnhead, P.; and Killick, R.
\newblock 2011.
\newblock Analysis of changepoint models.
\newblock In Barber, D.; Cemgil, A.~T.; and Chiappa, S., eds., {\em Bayesian
  Time Series Models}. Cambridge University Press.
\newblock chapter~10,  205--–224.

\bibitem[\protect\citeauthoryear{Eisner}{2016}]{Eisner16}
Eisner, J.
\newblock 2016.
\newblock Inside-outside and forward-backward algorithms are just backprop
  (tutorial paper).
\newblock In {\em Workshop on Structured Prediction for NLP@EMNLP},  1--17.

\bibitem[\protect\citeauthoryear{Erdman and Emerson}{2008}]{erdman2008fast}
Erdman, C., and Emerson, J.~W.
\newblock 2008.
\newblock A fast bayesian change point analysis for the segmentation of
  microarray data.
\newblock {\em Bioinformatics} 24(19):2143--2148.

\bibitem[\protect\citeauthoryear{Fearnhead and Liu}{2007}]{fearnhead2007line}
Fearnhead, P., and Liu, Z.
\newblock 2007.
\newblock On-line inference for multiple changepoint problems.
\newblock {\em Journal of the Royal Statistical Society: Series B (Statistical
  Methodology)} 69(4):589--605.

\bibitem[\protect\citeauthoryear{Fearnhead}{2005}]{fearnhead2005exact}
Fearnhead, P.
\newblock 2005.
\newblock Exact bayesian curve fitting and signal segmentation.
\newblock {\em IEEE Transactions on Signal Processing} 53(6):2160--2166.

\bibitem[\protect\citeauthoryear{Fearnhead}{2006}]{fearnhead2006exact}
Fearnhead, P.
\newblock 2006.
\newblock Exact and efficient bayesian inference for multiple changepoint
  problems.
\newblock {\em Statistics and Computing} 16(2):203--213.

\bibitem[\protect\citeauthoryear{Gelman, Rubin, and
  others}{1992}]{gelman1992inference}
Gelman, A.; Rubin, D.~B.; et~al.
\newblock 1992.
\newblock Inference from iterative simulation using multiple sequences.
\newblock {\em Statistical Science} 7(4):457--472.

\bibitem[\protect\citeauthoryear{Haynes, Eckley, and
  Fearnhead}{2017}]{kaylea2017computationally}
Haynes, K.; Eckley, I.~A.; and Fearnhead, P.
\newblock 2017.
\newblock Computationally efficient changepoint detection for a range of
  penalties.
\newblock {\em J. Comput. Graph. Stat.} 26(1):134--143.

\bibitem[\protect\citeauthoryear{Hoffman and Gelman}{2014}]{hoffman2014no}
Hoffman, M.~D., and Gelman, A.
\newblock 2014.
\newblock The no-u-turn sampler: adaptively setting path lengths in hamiltonian
  monte carlo.
\newblock {\em JMLR} 15(1):1593--1623.

\bibitem[\protect\citeauthoryear{Kingma and Welling}{2014}]{KingmaICLR14}
Kingma, D.~P., and Welling, M.
\newblock 2014.
\newblock {Auto-Encoding Variational Bayes}.
\newblock In {\em ICLR}.

\bibitem[\protect\citeauthoryear{Kucukelbir \bgroup et al\mbox.\egroup
  }{2017}]{KucukelbirJMLR2017}
Kucukelbir, A.; Tran, D.; Ranganath, R.; Gelman, A.; and Blei, D.~M.
\newblock 2017.
\newblock Automatic differentiation variational inference.
\newblock {\em JMLR} 18(1):430--474.

\bibitem[\protect\citeauthoryear{Lio and Vannucci}{2000}]{lio2000wavelet}
Lio, P., and Vannucci, M.
\newblock 2000.
\newblock Wavelet change-point prediction of transmembrane proteins.
\newblock {\em Bioinformatics} 16(4):376--382.

\bibitem[\protect\citeauthoryear{Lung-Yut-Fong, L{\'e}vy-Leduc, and
  Capp{\'e}}{2012}]{fong2012distributed}
Lung-Yut-Fong, A.; L{\'e}vy-Leduc, C.; and Capp{\'e}, O.
\newblock 2012.
\newblock Distributed detection/localization of change-points in
  high-dimensional network traffic data.
\newblock {\em Statistics and Computing} 22(2):485--496.

\bibitem[\protect\citeauthoryear{Mensch and Blondel}{2018}]{MenschB18}
Mensch, A., and Blondel, M.
\newblock 2018.
\newblock Differentiable dynamic programming for structured prediction and
  attention.
\newblock In {\em ICML},  3459--3468.

\bibitem[\protect\citeauthoryear{Neal}{2011}]{neal2011mcmc}
Neal, R.~M.
\newblock 2011.
\newblock {MCMC using Hamiltonian dynamics}.
\newblock {\em Handbook of Markov Chain Monte Carlo} 2(11):2.

\bibitem[\protect\citeauthoryear{Paszke \bgroup et al\mbox.\egroup
  }{2017}]{paszke2017automatic}
Paszke, A.; Gross, S.; Chintala, S.; Chanan, G.; Yang, E.; DeVito, Z.; Lin, Z.;
  Desmaison, A.; Antiga, L.; and Lerer, A.
\newblock 2017.
\newblock Automatic differentiation in {PyTorch}.
\newblock In {\em NIPS Autodiff Workshop}.

\bibitem[\protect\citeauthoryear{Rainforth \bgroup et al\mbox.\egroup
  }{2016}]{rainforth2016interacting}
Rainforth, T.; Naesseth, C.; Lindsten, F.; Paige, B.; Vandemeent, J.-W.;
  Doucet, A.; and Wood, F.
\newblock 2016.
\newblock Interacting particle markov chain monte carlo.
\newblock In {\em ICML},  2616--2625.

\bibitem[\protect\citeauthoryear{Reeves \bgroup et al\mbox.\egroup
  }{2007}]{reeves2007review}
Reeves, J.; Chen, J.; Wang, X.~L.; Lund, R.; and Lu, Q.~Q.
\newblock 2007.
\newblock A review and comparison of changepoint detection techniques for
  climate data.
\newblock {\em Journal of Applied Meteorology and Climatology} 46(6):900--915.

\bibitem[\protect\citeauthoryear{Rezende, Mohamed, and
  Wierstra}{2014}]{RezendeICML14}
Rezende, D.~J.; Mohamed, S.; and Wierstra, D.
\newblock 2014.
\newblock {Stochastic Backpropagation and Approximate Inference in Deep
  Generative Models}.
\newblock In {\em ICML},  1278--1286.

\bibitem[\protect\citeauthoryear{Spokoiny and
  others}{2009}]{spokoiny2009multiscale}
Spokoiny, V., et~al.
\newblock 2009.
\newblock Multiscale local change point detection with applications to
  value-at-risk.
\newblock {\em The Annals of Statistics} 37(3):1405--1436.

\bibitem[\protect\citeauthoryear{{Stan~Development~Team}}{2018}]{StanUsersGuide}
{Stan~Development~Team}.
\newblock 2018.
\newblock {\em Stan Modeling Language Users Guide and Reference Manual}.
\newblock Version 2.18.0.

\bibitem[\protect\citeauthoryear{Tolpin \bgroup et al\mbox.\egroup
  }{2016}]{tolpin2016design}
Tolpin, D.; van~de Meent, J.-W.; Yang, H.; and Wood, F.
\newblock 2016.
\newblock Design and implementation of probabilistic programming language
  anglican.
\newblock In {\em IFL},  6:1--6:12.

\bibitem[\protect\citeauthoryear{Truong, Oudre, and
  Vayatis}{2018}]{2018arXiv180100718T}
Truong, C.; Oudre, L.; and Vayatis, N.
\newblock 2018.
\newblock Selective review of offline change point detection methods.
\newblock {\em ArXiv} abs/1801.00718.

\bibitem[\protect\citeauthoryear{Wingate, Stuhlm{\"u}ller, and
  Goodman}{2011}]{Wingate2011}
Wingate, D.; Stuhlm{\"u}ller, A.; and Goodman, N.
\newblock 2011.
\newblock Lightweight implementations of probabilistic programming languages
  via transformational compilation.
\newblock In {\em AISTATS}.

\bibitem[\protect\citeauthoryear{Wood, van~de Meent, and
  Mansinghka}{2014}]{wood-aistats-2014}
Wood, F.; van~de Meent, J.~W.; and Mansinghka, V.
\newblock 2014.
\newblock A new approach to probabilistic programming inference.
\newblock In {\em AISTATS},  1024--1032.

\bibitem[\protect\citeauthoryear{Xuan}{2007}]{xuan2007bayesian}
Xuan, X.
\newblock 2007.
\newblock Bayesian inference on change point problems.
\newblock Master's thesis, The University of British Columbia, Vancouver,
  Canada.

\end{thebibliography}
\bibliographystyle{aaai}

\appendix

\section{Appendix}

\subsection{Proof of Proposition~6}
\noindent{\bf Proposition 6.}\ 
{\it For all $k,t$ with $1 \leq k < m$
  and $k \leq t < n$, we have $S_{k,t} =  S_{k,t-1} + S_{k-1,t-1} \times w_t$ and $S_{k,k-1}=0$.}

\begin{proof}
We focus on the first equality. 
\begin{align*}
  S_{k,t}
  &= \sum_{\tau_{0:k},\, \tau_k <t\,} \prod_{i=1}^k w_{\tau_i}
  + \sum_{\tau_{0:k},\, \tau_k =t\,} \prod_{i=1}^k w_{\tau_i}
  \\
  &= \sum_{\tau_{0:k},\, \tau_k < t\,} \prod_{i=1}^k w_{\tau_i}
  + w_t \times \sum_{\tau_{0:k-1},\, \tau_{k-1} < t\,} \prod_{i=1}^{k-1} w_{\tau_i}
  \\
  &= S_{k, t-1} + w_t \times S_{k-1, t-1}\,.
\end{align*}
\end{proof}

\subsection{Setup for data generation for RQ1 (\S4)}

\begin{table*}[t]
\caption{Parameter values used in data generation for RQ1.}
\centering
       \begin{tabular}{|c|r|r|r|r|r|}
           \hline
           &\multicolumn{1}{c|}{$n$}&\multicolumn{1}{c|}{$m^*$}&\multicolumn{1}{c|}{$\mu^*_{1:m^*}$}&
           \multicolumn{1}{c|}{$\sigma^*_{1:m^*}$}&\multicolumn{1}{c|}{$\tau^*_{0:m^*}$} \\ \hline
           \multirow{5}{*}{Varying $m^*$}&$50$&$1$&$(9.0,2.0)$&$(1.5,1.5)$&$(0,8,50)$ \\
           &$50$&$2$&$(9.0,2.0,9.0)$&$(1.5,1.5,1.5)$&$(0,8,16,50)$ \\ 
           &$50$&$3$&$(9.0,2.0,9.0,2.0)$&$(1.5,1.5,1.5,1.5)$&$(0,8,16,24,50)$ \\
           &$50$&$4$&$(9.0,2.0,9.0,2.0,9.0)$&$(1.5,1.5,1.5,1.5,1.5)$&$(0,8,16,24,32,50)$ \\
           &$50$&$5$&$(9.0,2.0,9.0,2.0,9.0,2.0)$&$(1.5,1.5,1.5,1.5,1.5,1.5)$&$(0,8,16,24,32,40,50)$ \\ \hline
           \multirow{5}{*}{Varying $n$}&$200$&$2$&$(8.8,2.0,7.3)$&$(1.8,1.1,1.7)$&$(0,27,80,200)$ \\
           &$400$&$2$&$(8.8,2.0,7.3)$&$(1.8,1.1,1.7)$&$(0,27,80,400)$ \\
           &$600$&$2$&$(8.8,2.0,7.3)$&$(1.8,1.1,1.7)$&$(0,27,80,600)$ \\
           &$800$&$2$&$(8.8,2.0,7.3)$&$(1.8,1.1,1.7)$&$(0,27,80,800)$ \\ \hline
       \end{tabular}
   \label{tab:detailed-setup-RQ1}
\end{table*}

See Table~\ref{tab:detailed-setup-RQ1}.

\subsection{Full results for RQ2 (\S4)}
\label{sec:appendix-2nd-moments}

See Figures~\ref{fig:estimation-2nd-moments} and \ref{fig:rhat-ess-results-well-log}.

\begin{figure}[h]
  \centering
  \begin{subfigure}[b]{\columnwidth}
    \centering
	\includegraphics[width=\columnwidth]{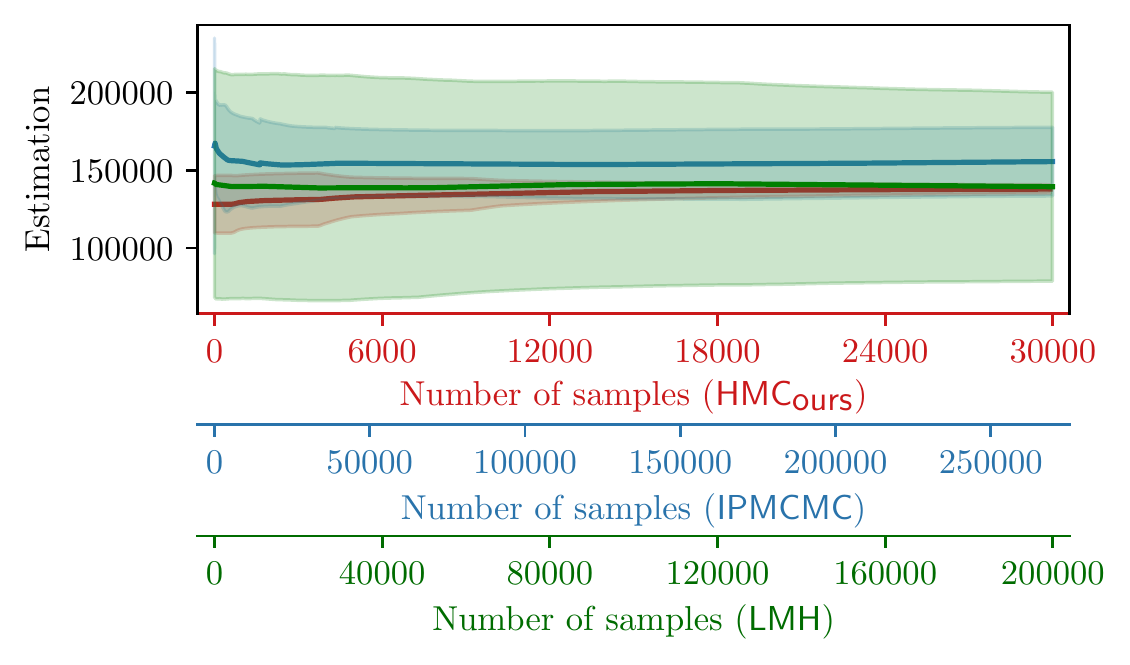}
    \caption{Synthetic data.}
  \end{subfigure}
  \\
  \begin{subfigure}[b]{\columnwidth}
    \centering
    \includegraphics[width=\textwidth]{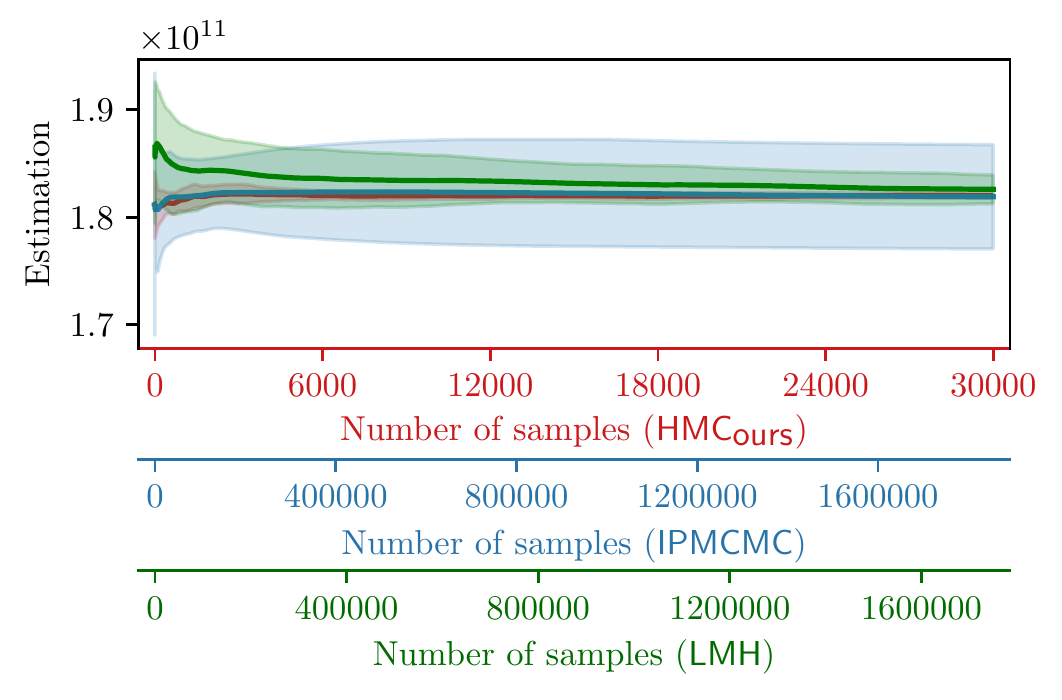}
    \caption{Well-log data.}
  \end{subfigure}
  \caption{%
    Convergence plots for estimating the sum of the second moments.
    See the caption of Figure~\ref{fig:estimation-1st-moments} for details.
  }
  \label{fig:estimation-2nd-moments}
\end{figure}

\begin{figure*}
	\centering
	\begin{subfigure}[b]{\textwidth}
                \centering
		\includegraphics[width=\textwidth]{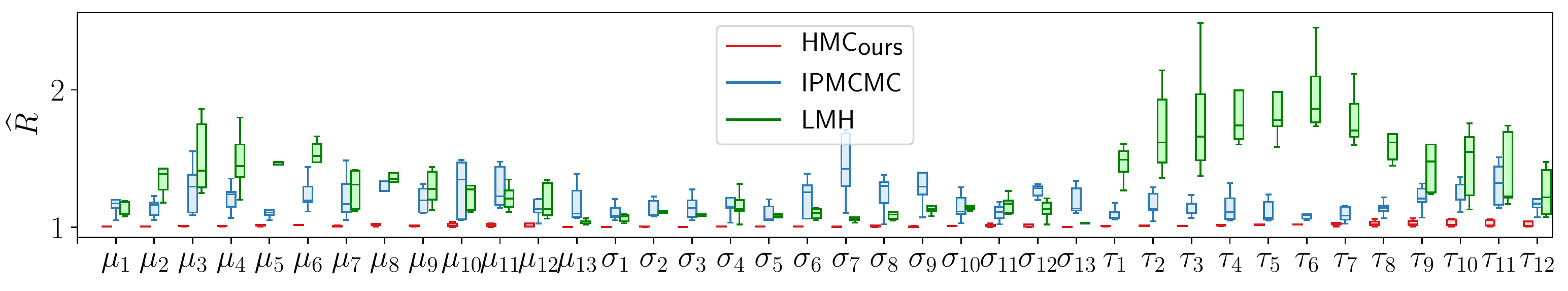}
		\caption{$\hat{R}$.}
	\end{subfigure}
    \\
	\begin{subfigure}[b]{\textwidth}
                \centering
		\includegraphics[width=\textwidth]{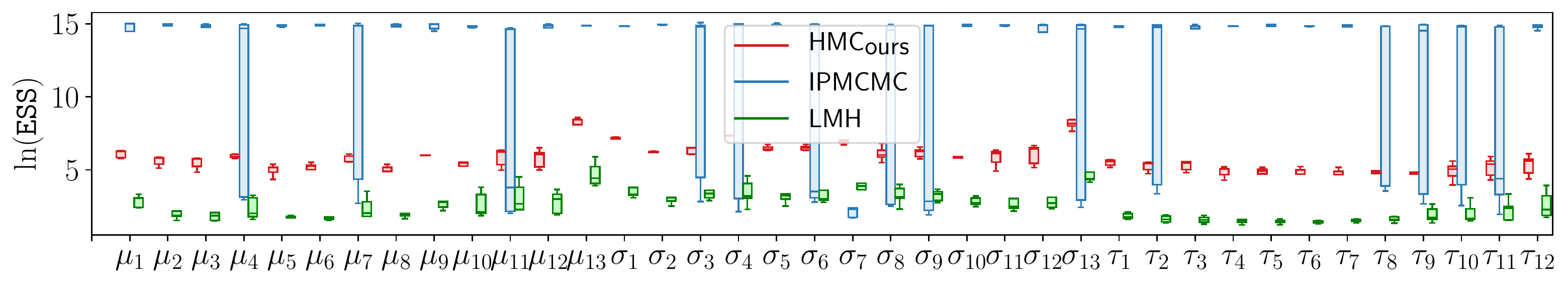}
		\caption{$\ln(\text{ESS})$.}
	\end{subfigure}
    \captionsetup{justification=centering}
	\caption{%
      $\hat{R}$ and ESS
      from $\HMCours$ (red), $\IPMCMC$ (blue), and $\LMH$ (green)
      for well-log data.
      \\
      The $x$-axis denotes the latent parameters and changepoints,
      and the $y$-axis $\hat{R}$ or $\ln(\text{ESS})$ values.
    }
	\label{fig:rhat-ess-results-well-log}
\end{figure*}

\end{document}